\theoremstyle{plain}
\newtheorem{theorem}{Theorem}[section]
\newtheorem{lemma}[theorem]{Lemma}
\theoremstyle{definition}
\theoremstyle{remark}
\definecolor{ao(english)}{rgb}{0.0, 0.5, 0.0}
\newcommand{\E}{{\mathbb{E}}}
\newcommand{\N}{\mathcal{N}}
\newcommand{\R}{\mathbb{R}}
\title{\textit{Warped Diffusion:} Solving Video Inverse Problems with Image Diffusion Models}
\author{%
  Giannis Daras\thanks{The work was done during an internship at NVIDIA.} \\ UT Austin \And
  \quad Weili Nie \\ NVIDIA \And
  \quad Karsten Kreis \\ NVIDIA \And
  \quad Alexandros G. Dimakis \\ UT Austin \And
  Morteza Mardani \\ NVIDIA \And
  Nikola B. Kovachki \\ NVIDIA \And
  Arash Vahdat \\ NVIDIA
}
\begin{document}

\maketitle
    
\begin{abstract}
Using image models naively for solving inverse video problems often suffers from flickering, texture-sticking, and temporal inconsistency in generated videos. To tackle these problems, in this paper, we view frames as continuous functions in the 2D space, and videos as a sequence of continuous warping transformations between different frames. This perspective allows us to train function space diffusion models only on \textit{images} and utilize them to solve temporally correlated inverse problems. The function space diffusion models need to be equivariant with respect to the underlying spatial transformations. To ensure temporal consistency, we introduce a simple post-hoc test-time guidance towards (self)-equivariant solutions. Our method allows us to deploy state-of-the-art latent diffusion models such as Stable Diffusion XL to solve video inverse problems. We demonstrate the effectiveness of our method for video inpainting and $8\times$ video super-resolution, outperforming existing techniques based on noise transformations. We provide generated video results in the following URL: \href{https://giannisdaras.github.io/warped_diffusion.github.io/}{https://giannisdaras.github.io/warped\_diffusion.github.io/}.
\end{abstract}
\begin{wrapfigure}{r}{0.48\textwidth}
    \vspace{-4.5em}
    \centering
    \includegraphics[width=0.5\textwidth, clip=True, trim={11cm, 7.5cm, 0cm, 0cm}]{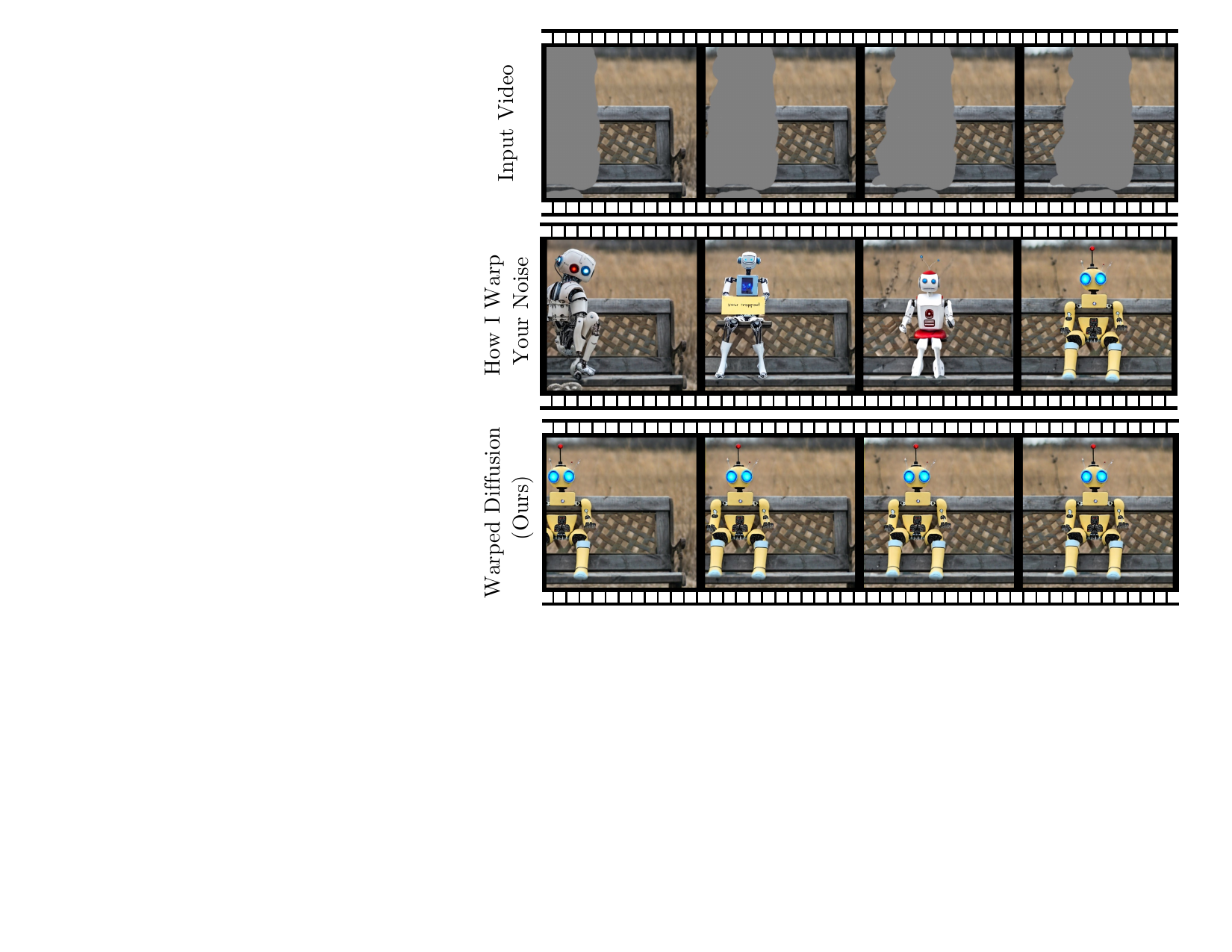}
    \caption{Inpainting results for ``a robot sitting on a bench''. As the input video shifts smoothly, our output frames stay consistent.}
    \vspace{-1.5em}
    \label{fig:teaser}
\end{wrapfigure}
\section{Introduction}
Diffusion models (DMs)~\cite{sohl2015deep,ho2020ddpm,song2020score} can synthesize photorealistic imagery~\cite{saharia2022imagen,ramesh2022dalle2,rombach2021highresolution,balaji2022ediffi,podell2024sdxl,esser2024scaling}. They can be conditioned easily, through explicit training or guidance~\cite{dhariwal2021diffusion,ho2021classifierfree}, and have also been widely used to solve inverse problems~\cite{diffusion_survey, whang2021deblurring,chung2022diffusion,chung2022improving,song2022pseudoinverse,wang2022zero,kawar2022denoising,mardani2023variational}, in particular for image processing applications like inpainting and super-resolution~\cite{ho2021cascaded,saharia2021palette,saharia2021image,rombach2021highresolution}.\looseness=-1

How do these methods extend to video processing and solving inverse problems on videos? Although video DMs are seeing rapid progress~\cite{ho2022imagen,singer2023makeavideo,blattmann2023videoldm,ge2023pyoco,blattmann2023stable,girdhar2023emu,bartal2024lumiere,videoworldsimulators2024}, general text-to-video synthesis has not yet reached the level of robustness and expressivity comparable to modern image models. Moreover,
no state-of-the-art video generative models are publicly available~\cite{videoworldsimulators2024}, and most video DMs are computationally expensive.
To circumvent these challenges, a natural research direction is to leverage existing, powerful \textit{image} generative models to solve \textit{video} inverse problems.\looseness=-1

Naively applying image DMs to videos in a frame-wise manner violates temporal consistency. Previous works alleviate the problem by fine-tuning on video data or by warping the networks' features, using, for instance, temporal or cross-frame attention layers~\cite{wu2023tune,liu2023video,ceylan2023pix2video,khachatryan2023text2video,qi2023fatezero,yang2023rerender,zhang2024towards,guo2023animatediff,guo2023sparsectrl}. However, these methods are usually designed specifically for high-level text-driven editing or stylization and are typically not directly applicable to general inverse problems. Moreover, without training on diverse video data they often cannot maintain high frequency information across frames. For a detailed discussion of the related works, we refer the reader to Section \ref{sec:related_work} in the Appendix.

The recent novel work, ``How I Warped Your Noise''~\cite{chang2023warped}, proposes \textit{noise warping} to achieve temporal consistency in generated videos by changing appropriately the input noise to the diffusion model. Videos can be thought of as image frames subject to spatial transformations. An object may move according to a translation; complex and general transformations can be described by motion vectors on the pixels defined through optical flow~\cite{fortun2015opticalflow}. It is these transformations that define how the noise maps need to be warped and transformed. In \citep{chang2023warped}, temporally consistent noise maps are given as input to the DM's denoiser, with the underlying assumption that temporally consistent inputs induce temporally consistent network outputs. In this paper, we argue that this assumption only holds true if the utilized image DM is \textit{equivariant} with respect to the spatial warping transformations. However, as we show in this work, the network is not necessarily equivariant because i) the conditional expectation modeled by the DM may not be equivariant, and, ii) more importantly, a free-form neural network, as used in typical DMs, will not learn a perfectly equivariant function. When the equivariance assumption is violated, the method proposed in \citep{chang2023warped} achieves poor results. This is typically the case for challenging conditional tasks (see Figure \ref{fig:teaser}) or when modeling complex distributions. Particularly, \citep{chang2023warped} finds that the proposed method has 
``limited impact on temporal coherency'' when applied to \textit{latent} diffusion models and that ``all the noise schemes produce temporally inconsistent results''.

We introduce a new framework, dubbed \textit{Warped Diffusion}, for the rigorous application of image DMs to video inverse problems. We employ a continuous function space perspective to DMs~\cite{lim2023score, pidstrigach2023infinite, franzese2024continuous,hagemann2023multilevel} that naturally allows noise warping for arbitrarily complex spatial transformations. Our method generalizes the warping scheme of~\citep{chang2023warped} and does not require any auxiliary high-resolution noise maps. To achieve equivariance, we propose \textit{equivariance self-guidance}, a novel sampling mechanism that enforces that the generated frames are consistent under the warping transformation.  
Our inference time approach elegantly circumvents the need for additional training. This unlocks the use of existing large DMs in a fully equivariant manner without further training, which may be prohibitive for a practitioner.\looseness=-1

We extensively validate our method on video inpainting and super-resolution. Super-resolution represents a situation with strong conditioning, while inpainting requires large-scale, temporally coherent synthesis of new content. Warped Diffusion outperforms previous methods quantitatively and qualitatively, and shows reduced flickering and texture sticking artifacts. 
Due to our equivariance guidance, our method can also be used with \textit{latent} DMs, which is not possible with previous approaches. Virtually all existing state-of-the-art text-to-image generation systems are indeed latent DMs, like Stable Diffusion~\cite{rombach2021highresolution}. Hence, any inverse problem solving method must be readily usable with latent DMs. In fact, all our experiments utilize the state-of-the-art text-to-image latent DM SDXL~\cite{podell2024sdxl}. \looseness=-1

\textbf{Contributions:} \textit{(a)} We propose Warped Diffusion, a novel framework for applying image DMs to video inverse problems. \textit{(b)} We introduce a principled scheme for noise warping, based on Gaussian processes and a function space DM perspective. \textit{(c)} We identify the equivariance of the DM as a critical requirement for the seamless application of image DMs to video inverse problems and propose an inference-time guidance method to enforce it. \textit{(d)} We comprehensively test Warped Diffusion and achieve state-of-the-art video processing performance when considering the use of image DMs. Critically, Warped Diffusion can be used with any image DMs, including large-scale latent DMs. %

\label{sec:problem}
\begin{figure}
    \centering
    \includegraphics[width=0.9\textwidth, clip=True]{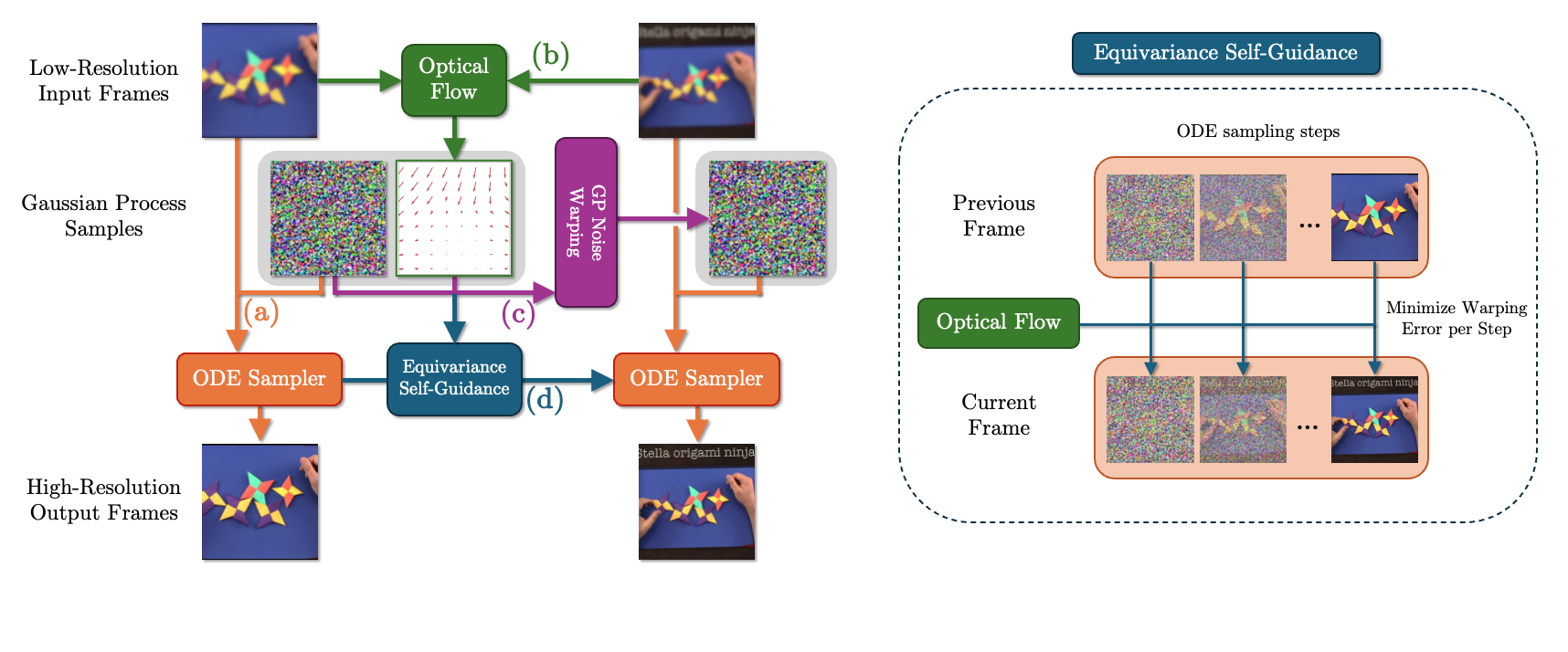}
    \caption{Visualization of Warped Diffusion applied to video super-resolution. (a) We develop a function space diffusion model that super-resolves images given samples from a Gaussian process (GP). To extend the image model to videos, (b) we extract warping transformations between consecutive input frames using optical flow. (c) We use the flow to warp the GP sample from the previous frame. (d) To ensure temporal consistency, we introduce equivariance self-guidance in the ODE sampler.}
    \label{fig:fig2}
\end{figure}

\vspace{-0.2cm}
\section{Functional Video Generation}
\vspace{-0.1cm}

The basis of our approach, summarized in Figure \ref{fig:fig2}, is to structure the generative model so that it is equivariant with respect to spatial deformations and apply these deformations successively to the input noise. Each deformation effectively warps the noise and the equivariance guarantees that each output image will be similarly warped. By using an optical flow from a real video to define a sequence of such deformations, a new video can be generated. To introduce our method, we first conceptualize both images and noise as functions on a domain and the generator as a mapping between two function spaces.\looseness=-1
\vspace{-0.1cm}
\subsection{Functional Generative Modeling and Videos}
\label{subsec:generaitve_functions}
Each video frame can be seen as a single image, and an image as a discretization of a vector-valued function on a rectangular domain. Consider the domain as the $2$-D unit square \(D = [0,1]^2\), defining an image as a function \(f : D \to \R^3\). For each location \(x \in D\), the value \(f(x) \in \mathbb{R}^3\) represents an RGB color. We assume images have infinite resolution. To formulate a model that generates such images, we must have a notion of a space containing all possible images. We'll use the separable Hilbert space \(H = L^2(D; \R^3)\), with pointwise formulas interpreted almost everywhere with respect to the Lebesgue measure. \looseness=-1

We assume that there exists a probability measure $\mu$ on $H$ whose support is the set of photorealistic images and denote by $\eta$ a known reference probability measure on $H$. In our case, $\eta$ will be a Gaussian measure on $H$; for details, see Section~\ref{sec:grf}. A \textit{generative model}, or \textit{transport map}, is then a mapping $G : H \to H$ such that the pushforward of $\eta$ under $G$ is $\mu$ which we denote as $G_\sharp \eta = \mu$. In particular, this implies that any random variable $\xi \sim \eta$ will satisfy $G(\xi) \sim \mu$. %
For diffusion models, $G$ can be defined by the probability flow ODE; see Section~\ref{sec:diffusion_guidance}.

Given an image \(f_0 \in H\), a \textit{video} with \(n+1 \in \mathbb{N}\) frames is the sequence of functions \((f_0, f_1, \dots, f_n) \in H^{n+1}\), where each subsequent function is obtained, at least partially, from the previous one by a deformation. Specifically, a sequence of bounded, injective maps \((T_j : D \to D_j)_{j=1}^{n}\) exists such that

\begin{equation}
    \label{eq:continuity_eq}
    \eqnmarkbox[NavyBlue]{f1}{f}_{\eqnmarkbox[RoyalPurple]{j}{j}}(\eqnmarkbox[OliveGreen]{x1}{x}) = \eqnmarkbox[NavyBlue]{f2}{f}_{\eqnmarkbox[RoyalPurple]{j2}{j-1}} \big( \eqnmarkbox[Plum]{T}{T}_{j}^{-1}(\eqnmarkbox[OliveGreen]{x2}{x}) \big ), \qquad \forall \: x \in \eqnmarkbox[Cerulean]{D}{D} \cap \eqnmarkbox[RedOrange]{dj}{D_{j}}, \;\; j=1,\dots,n,
\end{equation}
\annotatetwo[yshift=0.35em]{above}{f1}{f2}{Image}
\annotatetwo[yshift=-0.55em]{below}{x1}{x2}{Pixel Location}
\annotate[yshift=0.25em]{above}{T}{Optical Flow}
\annotate[yshift=-0.25em]{left,below}{j}{Frame Index}
\annotate[yshift=-0.55em]{below,left}{D}{Frame of Vision}
\annotate[yshift=-0.55em]{below,right}{dj}{Deformed Domain}

\vspace{1em}
where $D_{j} \coloneqq T_{j}(D)$ and we assume that the sets $D \cap D_{j}$ have positive Lebesgue measure. In video modeling, the sequence $(T_j )_{j=1}^{n}$ is usually referred to as the \textit{optical flow} as it specifies how each pixel in the previous frame moves to the next frame. While the frames can also be conceptualized as a continuum in time, we work with a discrete set of frames for simplicity. 
We consider $D$ to always represent our fixed frame of vision and we allow each $T_j$ to move pixels outside of this frame. Therefore \eqref{eq:continuity_eq} determines $f_{j}$ only on the set $D \cap D_j$ which contains pixels that remain within our field of vision. \looseness=-1

\subsection{Video Generation and Equivariance}
\label{subsec:equivariance}

Given our notion of a video and a generative model, we now describe how such a model can be used to generate new videos. Suppose we want to create a two-frame video given an initial frame \(f_0 \in H\) and a deformation map \(T_1 : D \to D_1\). Assume we have a generative model \(G: H \to H\) and an initial noise image \(\xi_0 \in H\) such that \(G(\xi_0) = f_0\). From definition, the new frame of our video is \(f_1 = f_0 \circ T_1^{-1}\) on \(D \cap D_1\). If \(D \subseteq D_1\), it might seem that our generative model is unnecessary. However, proceeding this way generates blurry and unrealistic videos.

The primary issue is that, in practice, we don't have access to \(f_0\) at an infinite resolution but only at a fixed, finite set of grid points \(E_k = \{x_1, \dots, x_k\} \subset D\). To determine \(f_1\) on our grid points, we need the values of \(f_0\) at the points \(T_1^{-1}(E_k) = \{T_1^{-1}(x_1), \dots, T_1^{-1}(x_k)\}\). It's highly unlikely that \(E_k = T_1^{-1}(E_k)\) for any realistic deformation.

Thus, we must interpolate \(f_0\) to \(T_1^{-1}(E_k)\), which usually leads to blurry results with standard methods. Furthermore, if \(D \not \subseteq D_1\), there will be regions where \(f_1\) is not determined by \(f_0\) and will need to be inpainted on the new visible domain. Therefore, for each frame, we must solve an interpolation and an inpainting problem: tasks for which generative models are well-suited.

Suppose we have access to the noise function \(\xi_0\) at infinite resolution, and its domain extends to all of \(\R^2\); we discuss both in Section~\ref{sec:grf}. We can then define the new frame in our video by applying the generative model to the deformed noise: \(f_1 = G (\xi_0 \circ T_1^{-1})\). The deformed noise function \(\xi_0 \circ T_1^{-1}\) gets its values from \(\xi_0|_D\) for points in \(D \cap D_1\) and from the extension of \(\xi_0\) to \(\R^2\) for all other points where inpainting is needed. To ensure this definition is consistent with \eqref{eq:continuity_eq}, \(G\) must be equivariant with respect to \(T_1^{-1}\). Specifically, for all \(\xi \in \text{supp}(\eta) \subseteq H\), we must have
\begin{equation}
    \label{eq:equivariance}
    G \big ( \xi \circ T^{-1}_1 \big ) (x) = G \big ( \xi \big ) \big ( T^{-1}_1 (x) \big ), \qquad \forall \; x \in D \cap D_1.
\end{equation}
Assuming \eqref{eq:equivariance}, it follows from $G(\xi_0) = f_0$, that $f_1(x) = G(\xi_1 |_D)(x) = (f_0 \circ T^{-1}_1)(x)$ for all $x \in D \cap D_1$ hence the pair $(f_0,f_1)$ is a valid 2 frame video according to the definition of Section~\ref{subsec:generaitve_functions}. To generate a video with any number of frames, we simply iterate on this process with a given sequence of deformation maps. Enforcing \eqref{eq:equivariance} can be done directly by the architectural design, through training with various deformation maps, or, through a guidance process; see Section \ref{sec:diffusion_guidance}. %

\subsection{White Noise}
\label{sec:white_noise}

It is common practice to train generative models assuming the reference measure \(\eta\) is Gaussian white noise. Specifically, a draw \(\xi \sim \eta\) on the grid points \(E_k = \{x_1, \dots, x_k\} \subset D\) is realized as \(\xi(x_l) = \chi_l\) for an i.i.d. sequence \(\chi_l \sim \mathcal{N}(0,1)\) for \(l=1, \dots, k\). However, this approach is incompatible with our goal of having the generative model perform interpolation. For most deformations \(T\) encountered in practice, none of the points in \(T^{-1}(E_k)\) will match those in \(E_k\). Consequently, each new evaluation \(\xi \big(T^{-1}(x_l)\big)\) will be independent of the sequence \(\{\chi_l\}_{l=1}^k\), making \(\xi \big(T^{-1}(E_k)\big)\) appear as a new noise realization unrelated to \(\xi(E_k)\). This incompatibility arises because white noise processes are distributions, not regular functions, meaning realizations are almost surely not members of \(H\) \cite{da2014stochastic}. \cite{chang2023warped} proposes a stochastic interpolation method to address this issue (see Appendix~\ref{app:brownian_bridge} for details and comparison). We generalize this idea and propose using generic Gaussian processes on \(H\).

\section{Method: Warped Diffusion}
\label{sec:method}
In Section~\ref{sec:problem}, we formulated the problem of video generation as the computation of a series of functions warped by an optical flow and proposed the use of a generative model for inpainting and interpolating the warped functions. The main challenges which remain are defining a functional noise process which can be evaluated continuously and a generative model which is equivariant with respect to warping. We propose to use Gaussian processes for our functional noise and a guidance procedure within the sampling step of a diffusion model to overcome these challenges.
\subsection{Gaussian Processes (GPs)}
\label{sec:grf}
A Gaussian Process (GP) \(\eta\) is a probability measure on \(H\) completely specified by its mean element and covariance operator. For a mathematical introduction, see Appendix~\ref{app:gp}. We identify Gaussian processes with positive-definite kernel functions \(\kappa : \R^2 \times \R^2 \to \R\). Recall that \(E_k = \{x_1, \dots, x_k\}\) denotes the grid points where we know the values of an image \(f \in H\). To realize a random function \(\xi \sim \eta\) on these points, we sample the finite-dimensional multivariate Gaussian \(N(0, Q)\), where \(Q \in \R^{k \times k}\) is the kernel matrix \(Q_{ij} = \kappa(x_i, x_j)\) for \(i, j = 1, \dots, k\).

Once sampled, given the fixed values \(\xi(E_k)\), \(\xi\) can be evaluated at any new point \(x^* \in D\) by computing the conditional distribution \(\xi(x^*) \mid \xi(E_k)\) \cite{rasmussen2005gaussian}. This approach allows us to realize random functional samples at infinite resolution through conditioning, thus resolving the interpolation problem. Furthermore, by ensuring the kernel \(\kappa\) is positive definite on a domain larger than \(D\), we can consistently sample \(\xi\) outside of \(D\), addressing the inpainting problem described in Section~\ref{subsec:equivariance}.

For high-resolution images when \(k\) is large, working with the matrix \(Q\) can be computationally expensive. Instead, we propose using \textbf{R}andom \textbf{F}ourier \textbf{F}eatures (RFF) to sample \(\eta\), which amounts to a finite-dimensional projection of the function \(\xi \sim \eta\) that converges in the limit of infinite features \cite{rahimi2007random, wilson2020efficiently}. We can approximate samples from a GP with a squared exponential kernel with length-scale parameter \(\epsilon > 0\) by $\xi (x) = \sqrt{\frac{2}{J}} \sum_{j=1}^J w_j \cos \big ( \langle z_j, x \rangle + b_j \big )$ for i.i.d. sequences $w_j \sim N(0,1)$, $z_j \sim N(0,\epsilon^{-2} I_2)$, $b_j \sim U(0, 2 \pi)$ where $J \in \mathbb{N}$ is the number of features. RFF allows us access to $\xi$ at infinite resolution on the entirety of the plane while also allowing for efficient computation.

\subsection{Function Space Diffusion Models and Equivariance Self-Guidance}
\label{sec:diffusion_guidance}
We will now focus on the generative model that needs to be equivariant to the noise transformations. Specifically, in this section, i) we introduce function space diffusion models, ii) we prove that if every prediction of the diffusion model is equivariant then the whole diffusion model sampling chain is equivariant to the underlying spatial transformations, and, iii) we describe \textit{equivariance self-guidance}, our sampling technique for enforcing the equivariance assumption.

For ease of notation, we will present everything for the case of unconditional video generation.
However, our method seamlessly incorporates any addition conditioning information that may be available. If $c_0,\dots,c_n \in \R^c$ is a sequence of known conditioning vectors then these can simply be passed into a conditional score model at the appropriate frame without any other change to our method; see Algorithm~\ref{alg}. Conditioning vectors could be, for example, low resolutions versions of a video or an original video with regions masked. In Section~\ref{sec:experiments}, we focus on such conditional tasks.

\textbf{Function Space Diffusion Models.}
Typically, diffusion models are trained with white noise. As explained in Section~\ref{sec:white_noise}, a principled continuous evaluation of the noise requires a functional process. We briefly describe diffusion models in the context of sampling using the Gaussian processes of Section~\ref{sec:grf}. We show in Section~\ref{sec:training_correlated} (Table \ref{tab:single_frame_results}) that a model trained with white noise can be fine-tuned to GP noise without any loss in performance. \looseness=-1

While it is possible to formulate diffusion models on the infinite-dimensional space $H$ e.g. \cite{lim2023score}, we will proceed in the finite-dimensional case for ease of exposition. In particular, we will define the forward and backward process as a flow on a vector $u \in \R^k$, thinking of the entries as the values of a scalar function evaluated on the grid $E_k$ and recall that $Q$ is the kernel matrix on $E_k$.

We consider forward processes of the form,
\begin{equation}
    \label{eq:forward_diffusion}
    \mathsf{d} u_t = \big ( 2 \sigma(t) \dot{\sigma} (t) Q \big )^{1/2} \mathsf{d} W_t, \quad u(0) = u_0 \sim \mu
\end{equation}
where $W_t$ is a standard Wiener process on $\R^k$ and $\sigma$ is a scalar-valued, once differentiable function.
This process results in conditional distributions $p(u_t | u_0) = N(u_0, \sigma^2 (t) Q)$, see \cite{karras2022elucidating}. Let $p(u_t,t)$ denote the density of $u_t$ induced by \eqref{eq:forward_diffusion}. Then the following backward in time ODE,
\begin{equation}
    \label{eq:probability_flow_ode}
    \frac{\mathsf{d} u_t}{\mathsf{d} t} = - \sigma(t) \dot{\sigma}(t) Q \nabla_u \log p(u_t, t)
\end{equation}
started at $u(\tau)$ distributed according to \eqref{eq:forward_diffusion} has the same marginal distributions $p(u_t,t)$ as \eqref{eq:forward_diffusion} on the interval $[0,\tau]$; see \cite{karras2022elucidating}. Approximating $N(u_0, \sigma^2 (\tau) Q)$ by $N(0, \sigma^2 (\tau) Q)$, we may then define the generative model $G$ by the mapping $u(\tau) \mapsto u(0)$ with reference measure $\eta = N(0, \sigma^2 (\tau) Q)$.

Solving \eqref{eq:probability_flow_ode} requires knowledge of the score $\nabla_u \log p(u_t,t)$. Instead of learning the score, we opt for directly learning the weighted score $Q \nabla_u \log p(u_t,t)$. This design choice leads to faster sampling since we do not need to perform any expensive matrix multiplication with $Q$ at inference time.

A generalized version of Tweedie's formula (for proof see Appendix~\ref{app:tweedie}) implies:
\begin{equation}
    \label{eq:tweedie}
    Q \nabla_u \log p(u_t,t) = \frac{\E [u_0 | u_t] - u_t}{\sigma^2 (t)}.
\end{equation}
We approximate $\E [u_0 | u_t]$ with a neural network $h_\theta$ by minimizing the denoising objective:
\begin{equation}
    \label{eq:objective_func}
    \E_{t \sim U(0,\tau)} \E_{u_0 \sim \mu} \E_{u_t \sim N(u_0, \sigma^2 (t) Q)} |h_\theta(u_t,t) - u_0|^2.
\end{equation}
Having a minimizer $h_\theta$ of \eqref{eq:objective_func} gives us access to the weighted score $Q \nabla_u \log p(u_t,t)$ via \eqref{eq:tweedie}. We may then obtain an approximate solution to the map $u(\tau) \mapsto u(0)$ by discretizing \eqref{eq:probability_flow_ode} in time. We consider Euler scheme updates given by
\begin{equation}
    \label{eq:euler_scheme}
    u_{t-\Delta t} = u_t - \Delta t \frac{\dot{\sigma} (t)}{\sigma (t)} \big (h_\theta(u_t,t) - u_t \big ).
\end{equation}
started with $u_\tau \sim N(0, \sigma^2 (\tau) Q)$ for some time step $\Delta t >0$. 

\textbf{Equivariance for the Probability Flow ODE.}
\label{sec:equiv_for_flow}
Since the diffusion model works with discrete inputs, we need to introduce a discretization of \eqref{eq:equivariance} for the network. For a deformation $T_1$, we define equivariance as 
\begin{equation}
    \label{eq:discrete_score_equivariance}
    h_\theta (u_t \circ T^{-1}_1, t) \circ T_1 = h_\theta (u_t, t),
\end{equation}
which is obtained from composing both sides of \eqref{eq:equivariance} with $T_1$. Note that \eqref{eq:discrete_score_equivariance} is valid only for pixels which stay within frame and we compute the l.h.s. with bilinear interpolation on the network output. The input to the network on the l.h.s. is computed with RFFs without any interpolation. Given this discrete equivariance is satisfied for every prediction of the network, it is straightforward to show that the whole diffusion model sampling chain will be equivariant. Indeed, the whole approximation to $u(\tau) \mapsto u(0)$ is equivariant by the linearity of composition -- for a full derivation, see Appendix~\ref{app:flow_equivariance}. 

\textbf{Equivariance Self-Guidance.}
The condition \eqref{eq:discrete_score_equivariance} is rarely satisfied for deformations $T_1$ arising in practical settings. This is because either the conditional expectation $\E[u_0 | u_t]$ is not equivariant with respect to $T^{-1}_1$ or the neural network approximation has not fully captured it. If the underlying equivariance assumption breaks, methods that rely solely on noise warping for temporal consistency, e.g. \cite{chang2023warped}, will perform poorly. This is evident in challenging conditional tasks (see Figure \ref{fig:teaser}).

A potential solution is to directly train the network by adding \eqref{eq:discrete_score_equivariance} as a regularizer. However, this requires large amounts of video data from which to extract optical flows. Furthermore,  by satisfying \eqref{eq:discrete_score_equivariance} over a large class of $T_1$(s), the network may become less apt at satisfying \eqref{eq:objective_func} and lose its generative abilities. Therefore, we opt for \textit{guiding the model towards equivariant solutions at inference time}.

We first sample noise $u_\tau^{(0)}$ and generate the first frame following \eqref{eq:euler_scheme}, keeping the outputs of the network at each time step $\{h_\theta (u^{(0)}_t, t)\}$. To generative the next frame, we warp our noise $u^{(1)}_\tau = u^{(0)}_\tau \circ T_1^{-1}$ with RFFs and again follow \eqref{eq:euler_scheme} but this time using \eqref{eq:discrete_score_equivariance} as guidance. In particular, we take a gradient steps in the direction of the loss function $|h_\theta (u^{(1)}_t, t) \circ T_1 - h_\theta(u^{(0)}_t,t)|^2$, computed on the pixels that stay within frame. All frames can be generated by iterating this procedure as summarized in Algorithm~\ref{alg} (and visualized in Figures \ref{fig:fig2}, \ref{fig:fig3}) which also shows how to use conditioning information. Guidance is typically used to solve inverse problems with diffusion models (e.g. see \citep{chung2022diffusion}), but here the guidance is applied to align the model with its own past predictions. We emphasize that to compute the composition with $T_1$ above, we use bilinear interpolation on the network outputs but we never need to interpolate the network inputs since we can compute the warping via RFFs. Furthermore, since we are matching interpolated outputs to ones that are not interpolated, our output images remain sharp. This in contrast to directly using a discrete version of \eqref{eq:equivariance} which would suggest that we match network outputs to interpolated images, producing blurry results. \looseness=-1

\begin{algorithm}[htp]
        \small
       \caption{Warped Diffusion -- Temporal Consistency with Equivariance Self Guidance }
       \label{alg:alg1}
        \begin{algorithmic}[1]
        \Require Conditioning vectors $\{c_j\}_{j=0}^{n}$, Step $\Delta t$, Time $\tau$, Schedule $\sigma(t)$, Model $h_{\theta}$, Guidance Strength $\lambda$.
        \State $\{T_j, T_j^{-1}\}_{j=1}^n  \gets \texttt{compute\_optical\_flow}(\{c_j\}_{j=0}^{n})$
        \State $u^{(0)}_\tau \sim \mathrm{GP}$ using RFFs in Section~\ref{sec:grf} \Comment{Fresh noise sample for the first frame}
        \State Compute trajectory $\{u^{(0)}_t\}$ using \eqref{eq:euler_scheme} \Comment{Sample first frame}
        \For{$j \gets 1$ to $n$}
            \State $u^{(j)}_\tau \gets u^{(j-1)}_\tau \circ T^{-1}_j$ using RFFs\Comment{Warp noise from previous frame}
            \State $t \gets \tau$
                \While{$t > 0$}
                    \State $u_{t - \Delta t}^{(j)} \gets u_t^{(j)} - \Delta t \frac{\dot{\sigma} (t)}{\sigma (t)} \big (h_\theta(u_t^{(j)},t, c_j) - u_t^{(j)} \big )$ \Comment{Take Euler step}
                    \State $e_t^{(j)} \gets \big | h_\theta(u_t^{(j)},t, c_j) \circ T_j - h_{\theta}(u_{t}^{(j-1)}, t, c_{j-1}) \big |^2$ \Comment{Compute warping error} 
                    \State $u_{t - \Delta t}^{(j)} \gets u_{t - \Delta t}^{(j)} - \frac{\lambda}{\sqrt{e_t}} \nabla_u e_t^{(j)}$ \Comment{Equivariance self guidance}
                    \State $t \gets t - \Delta t$
                \EndWhile
        \EndFor
        \State {\textbf{return} video $\{u_0^{(j)}\}_{j=0}^{n}$} 
        \end{algorithmic}
        \label{alg}
\end{algorithm}

\section{Experimental Results}
\label{sec:experiments}
For all our experiments, we use Stable Diffusion XL~\citep{podell2024sdxl} (SDXL) as our base image diffusion model. We start by finetuning SDXL on conditional tasks. We choose super-resolution and inpainting as the tasks of interest since they are both commonly used in the inverse problems literature and they represent two distinct scenarios: in super-resolution, the input condition is strong and in inpainting, the model needs to generate new content. For super-resolution, we choose a downsampling factor of $8$. For inpainting, we create masks of different shapes at random, following the work of~\citep{sdxl_inp}. During the finetuning, we train the model to predict the uncorrupted image given the following inputs: i) the encoding of the noised image, ii) the noise level, and, iii) the encoding of the corrupted (downsampled/masked) image. To condition on the corrupted observation, we concatenate the measurements across the channel dimension. We train models with and without correlated noise on the COYO dataset~\citep{kakaobrain2022coyo-700m} for $100$k steps. We show realizations of independent and correlated noise in Figure \ref{fig:noise_visuals}. Additional implementation details are in Section \ref{sec:training_details}, including the parameters for the GP introduced in Section \ref{sec:grf}. \looseness=-1
\subsection{Training with correlated noise}
\label{sec:training_correlated}
The first step is to assess the quality of the trained models. To do so, we take images from a test split of the COYO dataset, we corrupt them (either by masking or downsampling) and we measure the conditional performance of the trained models. We use a diverse set of metrics that are commonly used in the inverse problems literature: CLIP Text Score~\citep{radford2021learningclip}, CLIP Image Score~\citep{radford2021learningclip}, SSIM~\citep{wang2004imagessim}, LPIPS~\citep{zhang2018unreasonablelpips}, MSE, Inception Score~\citep{salimans2016improvedinception} and FID~\citep{heusel2017gansfid}. The first five metrics measure point-wise restoration performance. Inception Score measures the quality of the generated distribution (without an explicit reference distribution). Finally, FID measures restoration performance in a distributional sense, i.e. it measures how close is the distribution after restoration to the ground truth distribution. 

We report our results for the super-resolution and inpainting models in Table \ref{tab:single_frame_results}. The main finding is that finetuning with correlated noise does not compromise performance, i.e. SDXL models finetuned with correlated noise perform on par with SDXL models that are trained with independent noise. Particularly for inpainting, the GP models slightly outperform models trained with independent noise across all metrics. We provide qualitative results for our all models in Figure \ref{fig:teaser} and in Appendix Figures \ref{fig:inp_visuals}, \ref{fig:super_res_visuals}.\looseness=-1

We remark that the advantages of using an initial distribution other than white noise have been explored in prior work \cite{daras2022soft, bansal2022cold, hoogeboom2022blurring}. Our new finding is that a model initially trained with white noise can be easily fine-tuned to work with correlated noise. To the best of our knowledge, ours is the first work that shows that Stable Diffusion XL can be fine-tuned to work with correlated noise. \looseness=-1

We underline that prior to fine-tuning Stable Diffusion XL produces unrealistic images when the sampling chain is initialized with correlated noise. Our experiments show that post-finetuning, the model can handle spatially correlated noise in the input without compromising performance. Our GP Warping mechanism requires models that can handle correlated noise. Hence, these fine-tunings are essential for the rest of the paper.

\begin{table}[h]
    \centering
        \caption{Single-frame evaluation of super-resolution and inpainting models.}
    \resizebox{\textwidth}{!}{
    \begin{tabular}{lccccccc}
        \toprule
        Model & FID $\downarrow$ & Inception $\uparrow$ & CLIP Txt $\uparrow$ & CLIP Img $\uparrow$ & SSIM $\uparrow$ & LPIPS $\downarrow$ & MSE $\downarrow$\\
        \midrule
        Super-resolution GP & \textbf{37.514} & \textbf{11.917} & \textbf{0.272\scriptsize{${\pm}$0.042}} & 0.955${\pm}$\scriptsize{0.029} & 0.770${\pm}$\scriptsize{0.106} & 0.253${\pm}$\scriptsize{0.076} & 0.004${\pm}$\scriptsize{0.004} \\
        Super-resolution Indep. & 40.843 & 11.679 & 0.271${\pm}$\scriptsize{0.041} & \textbf{0.957\scriptsize{${\pm}$0.027}} & \textbf{0.785\scriptsize{${\pm}$0.108}} & \textbf{0.242\scriptsize{${\pm}$0.078}} & 0.004${\pm}$\scriptsize{0.004} \\ \midrule
        Inpainting GP & \textbf{58.727} & \textbf{11.769} & \textbf{0.276\scriptsize{${\pm}$0.042}} & \textbf{0.929\scriptsize{${\pm}$0.060}} & \textbf{0.798\scriptsize{${\pm}$0.134}} & \textbf{0.181\scriptsize{${\pm}$0.122}} & \textbf{0.056\scriptsize{${\pm}$0.084}} \\
        Inpainting Indep. & 61.380 & 11.707 & 0.275${\pm}$\scriptsize{0.048} & 0.913${\pm}$\scriptsize{0.089} & 0.778${\pm}$\scriptsize{0.161} & 0.198${\pm}$\scriptsize{0.134} & 0.057${\pm}$\scriptsize{0.076} \\
        \bottomrule
    \end{tabular}}
    \vspace{-1em}
    \label{tab:single_frame_results}
\end{table}

\subsection{Noise Warping and Equivariance Self Guidance}
In the previous experiments, we measured the restoration performance of the trained models for a single image and we established that models trained with correlated noise perform on par (or even outperform) models trained with independent noise. The next step is to measure the temporal behavior of the models, i.e. how well they work for videos.

\textbf{Noise Warping baselines.}
As explained in Section \ref{sec:problem}, to apply image diffusion models to videos, we need to transform the noise as we move from one frame to the next. We consider the following noise-warping baselines that were used in \citep{chang2023warped}: \textbf{Fixed noise} uses the same noise across all the frames. \textbf{Resample noise} samples a new noise for each new frame. \textbf{Nearest Neighbor} uses the noise of the nearest location in the grid to evaluate the noise at the location that is not on the regular grid $E_k$. \textbf{Bilinear Interpolation} interpolates the values of the noise bilinearly in the neighboring locations that lie on the grid. \textbf{How I Warped Your Noise~\citep{chang2023warped}} is the state-of-the-art method for solving temporally correlated inverse problems with image diffusion models. It warps the noise by using auxiliary high-resolution noise maps (see our intro, related work section, and Section \ref{app:brownian_bridge}). \textbf{Our GP Noise Warping} warps the input noise by resampling the Gaussian process in the mapped locations. We note that the Fixed Noise, Resample Noise, and Nearest Neighbor noise warping methods can be applied to models that are trained with either independent noise or correlated noise coming from a GP. For all the experiments, we also include our proposed method, \textbf{\textit{Warped Diffusion}} that uses GP Noise Warping and Equivariance Self-Guidance (see Algorithm \ref{alg:alg1} for a reference implementation).

\textbf{Video Evaluation Metrics.} We follow the evaluation methodology of the ``How I Warped Your Noise`` paper~\citep{chang2023warped}. Specifically, we want to measure two different aspects of our method: i) average restoration performance across frames, ii) temporal consistency. For i), we measure the average of all the previously reported metrics (FID, Inception, CLIP Image/Text score, SSIM, LPIPS and MSE) across the frames. For ii), we measure the self-warping error, i.e. how consistent are the model's predictions across time. The warping error can be computed in either pixel or latent space and also with respect to the first generated frame or the previously generated frame, totaling $4$ warping errors.

To warm up, we start with videos that are synthetically generated by 2-D shifting of a single image, as in Figure \ref{fig:teaser}. To further simplify the setup, we consider the easy case of shifting the current frame by an integer amount of pixels with each new frame. For 2-D translations by an integer amount of pixels, the Nearest Neighbor, Bilinear Interpolation, How I Warped Your Noise and GP Noise Warping methods they become essentially the same since we always evaluate the noise distribution on points in the grid $E_k$. Hence, the only difference is whether we apply these methods to white noise or to GPs. \looseness=-1

Figure \ref{fig:teaser} (Row 2) shows that the How I Warped Your Noise baseline produces temporally inconsistent results as we shift the masked input image. Even though all the inpaintings are of high quality, the baseline results are temporally inconsistent. Instead, our \textit{Warped Diffusion} method produces temporally consistent results since it enforces equivariance by design. Since the How I Warped Your Noise warping mechanism and GP coincide here, the benefit strictly comes from enforcing the equivariance property. In fact, one could get the same results for the How I Warped Your Noise method by penalizing for equivariance at inference time.

We present quantitative results regarding temporal consistency in Figure \ref{fig:noise_wp_comparisons_inpainting} (and additional results in Figure \ref{fig:noise_wp_comparisons_inpainting_part_2} in the Appendix). As shown in the Figure, the fixed noise and the resample noise baselines perform the worst w.r.t. the temporal consistency both in latent and pixel space. The warping error of the Resample baseline is almost constant across frames as expected, while the warping error of the Fixed Noise increases with time. Both the How I Warped Your Noise method and our GP warping framework significantly improve the baselines. Yet, they still have significant temporal inconsistencies as evidenced by the results in Figure \ref{fig:teaser} and the supplemental videos. The two methods perform on par on this task since they are essentially the same when it comes to integer shifts: the only difference is that GP Noise Warping is applied to correlated noise coming from a GP. The remaining temporal errors are not an artifact of the noise warping mechanism but they are due to the fact that the model itself is not equivariant w.r.t. the underlying transformation. The warping errors essentially disappear when we apply Equivariance Self Guidance. As shown in Figure \ref{fig:noise_wp_comparisons_inpainting}, our method, Warped Diffusion, achieves almost $0$ warping error (1e-4 mean pixel error with respect to the first frame to be precise) since it is enforcing equivariance by design.

The only remaining question is whether Warped Diffusion maintains good restoration performance. To answer this, we measure mean restoration performance across frames for the aforementioned metrics. We report our results in Table \ref{tab:comp_inpainting}, including the mean warping error with respect to the first frame. As shown, Warped Diffusion maintains high performance across all the considered metrics while being significantly superior in terms of temporal consistency. The conclusion is that all the other noise warping baselines, including the previous state-of-the-art How I Warped Your Noise paper~\citep{chang2023warped}, perform poorly in terms of temporal consistency since they rely on the assumption that the network is equivariant. Even for simple temporal correlations such as integer movement in the 2-D space, this assumption is false for the challenging inpainting task. Warped Diffusion is the only method that achieves temporal consistency while it still manages to maintain high reconstruction performance. \looseness=-1

\setlength{\tabcolsep}{3pt}

\begin{table}[!htp]
\centering
\vspace{-1.25em}
\caption{Mean-frame evaluation of inpainting models for the translation task.}
\resizebox{\textwidth}{!}{
\begin{tabular}{lcccccccc}
\toprule
Method & Warping Err $\downarrow$ & FID $\downarrow$ & Inception $\uparrow$ & CLIP Txt $\uparrow$ & CLIP Img $\uparrow$ & SSIM $\uparrow$  & LPIPS $\downarrow$ & MSE $\downarrow$ \\
\midrule
Fixed (gp) & 0.129\scriptsize{${\pm}$0.022} & 60.853\scriptsize{${\pm}$2.908} & \textbf{12.421\scriptsize{${\pm}$0.761}} & \textbf{0.280\scriptsize{${\pm}$0.003}} & 0.924\scriptsize{${\pm}$0.005} & 0.800\scriptsize{${\pm}$0.001} &  \textbf{0.182\scriptsize{${\pm}$0.001}} & 0.060\scriptsize{${\pm}$0.002} \\
Fixed (indep) & 0.080\scriptsize{${\pm}$0.014} & 67.021\scriptsize{${\pm}$2.696} & 10.301\scriptsize{${\pm}$0.392} & 0.275\scriptsize{${\pm}$0.002} & 0.919\scriptsize{${\pm}$0.004} & 0.780\scriptsize{${\pm}$0.001} & 0.195\scriptsize{${\pm}$0.001} & 0.059\scriptsize{${\pm}$0.002} \\
Resample (indep) & 0.101\scriptsize{${\pm}$0.006} & 71.078\scriptsize{${\pm}$4.185} & 11.740\scriptsize{${\pm}$0.435} & 0.277\scriptsize{${\pm}$0.002} & 0.921\scriptsize{${\pm}$0.004} & 0.781\scriptsize{${\pm}$0.006} & 0.196\scriptsize{${\pm}$0.002} & 0.061\scriptsize{${\pm}$0.003}\\
Resample (gp) & 0.141\scriptsize{${\pm}$0.008} & 60.029\scriptsize{${\pm}$4.389} & 11.318\scriptsize{${\pm}$0.403} & 0.277\scriptsize{${\pm}$0.002} & \textbf{0.925\scriptsize{${\pm}$0.003}} & \textbf{0.806\scriptsize{${\pm}$0.005}} & \textbf{0.182\scriptsize{${\pm}$0.002}} & \textbf{0.056\scriptsize{${\pm}$0.003}} \\
How I Warped (indep) & 0.046\scriptsize{${\pm}$0.007} & 68.701\scriptsize{${\pm}$2.938} & 10.877\scriptsize{${\pm}$0.432} & 0.276\scriptsize{${\pm}$0.001} & 0.910\scriptsize{${\pm}$0.005} & 0.781\scriptsize{${\pm}$0.001} & 0.197\scriptsize{${\pm}$0.001} & 0.067\scriptsize{${\pm}$0.001} \\
GP Warping & 0.061\scriptsize{${\pm}$0.010} & \textbf{59.897\scriptsize{${\pm}$3.718}} & 11.727\scriptsize{${\pm}$0.375} & 0.277\scriptsize{${\pm}$0.002} & 0.924\scriptsize{${\pm}$0.004} & 0.803\scriptsize{${\pm}$0.002} & \textbf{0.182\scriptsize{${\pm}$0.001}} & 0.057\scriptsize{${\pm}$0.002}\\
\textbf{Warped Diffusion} (Ours) & \textbf{0.001\scriptsize{${\pm}$0.001}} & 61.249\scriptsize{${\pm}$2.499} & 11.802\scriptsize{${\pm}$0.427} & 0.276\scriptsize{${\pm}$0.001} & 0.917\scriptsize{${\pm}$0.006} & 0.779\scriptsize{${\pm}$0.011} & 0.188\scriptsize{${\pm}$0.006} & 0.058\scriptsize{${\pm}$0.001} \\
\bottomrule
\end{tabular}}
\label{tab:comp_inpainting}
\vspace{-0.8em}
\end{table}

We finally remark that our sampling algorithm enforces equivariance in the latent space. Yet, the warping errors are negligible in the pixel space as well. Our finding is that improving latent space equivariance translates to improvements in pixel space equivariance. The authors of~\citep{chang2023warped} also find that ``the VAE decoder is translationally equivariant in a discrete way''.

\begin{figure}[!tp]
\vspace{-1cm}
    \centering
    \begin{subfigure}[b]{0.48\textwidth}
        \centering
        \includegraphics[width=\textwidth]{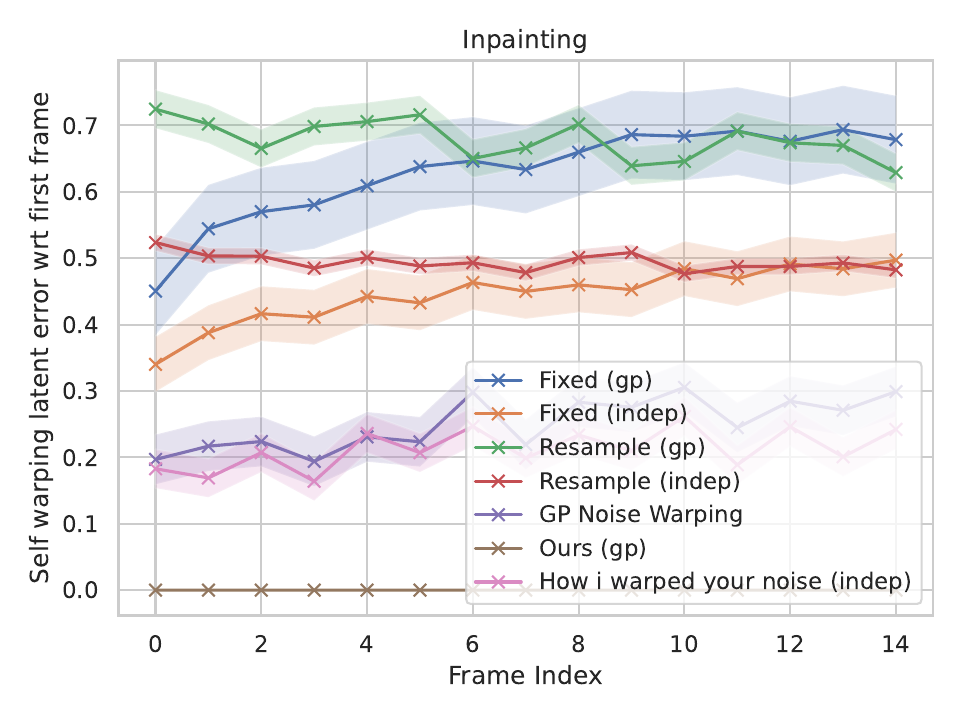}
        \caption{Self-warping error w.r.t. first frame in latent space.}
        \label{fig:inp_noise_wp_first_latent}
    \end{subfigure}
    \hfill
    \begin{subfigure}[b]{0.48\textwidth}
        \centering
        \includegraphics[width=\textwidth]{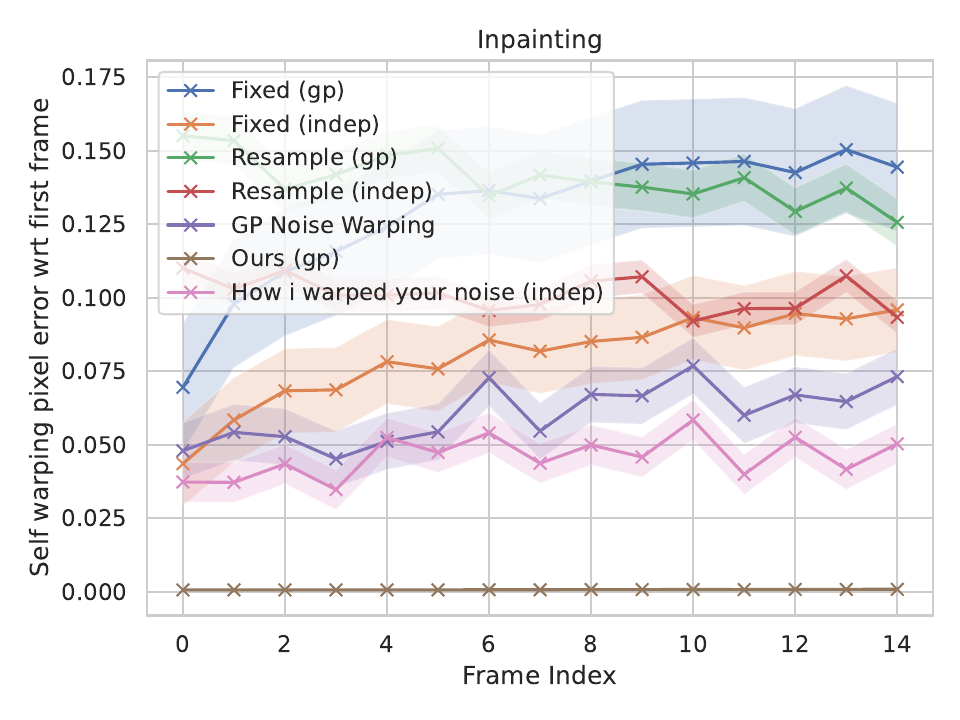}
        \caption{Self-warping error w.r.t. first frame in pixel space.}
        \label{fig:inp_noise_wp_first_pixel}
    \end{subfigure}
    \caption{\small{Self-warping error w.r.t. first frame for the inpainting task as we shift the input frame.}}
    \label{fig:noise_wp_comparisons_inpainting}
    \vspace{-0.5cm}
\end{figure}

\subsection{Effect of Sampling Guidance for more general transformations}

We proceed to evaluate our method on realistic videos. We measure performance on 600 captioned videos from the FETV~\citep{liu2023fetv} dataset. Since baseline inpainting methods fail even for very simple temporal transformations, we focus on $8\times$ super-resolution for our comparisons on FETV.

For our video results, we could not provide comparisons with the How I Warped Your Noise paper. At the time of this writing, there was no available reference implementation as we confirmed with the authors by direct communication. In any case, the authors acknowledge as a limitation of their work that their proposed method has 
``limited impact on temporal coherency'' when applied to latent models and that ``all the noise schemes produce temporally inconsistent results''~\citep{chang2023warped}. Once again, we attribute this to the non-equivariance of the denoiser, which we mitigate with our guidance algorithm.

We proceed to evaluate our method and the baselines with respect to temporal consistency and mean restoration performance across frames, as we did for our inpainting experiments. We present our results in Table \ref{tab:noise_warping_comp_videos} and additional results in Figures \ref{fig:noise_wp_comparisons_video}, \ref{fig:super_res_visuals} of the Appendix and in the following URL as videos: \href{https://giannisdaras.github.io/warped_diffusion.github.io/}{https://giannisdaras.github.io/warped\_diffusion.github.io/}.
As shown in Table \ref{tab:noise_warping_comp_videos}, there is a trade-off between temporal consistency and restoration performance. Methods that perform better in terms of temporal consistency often have significantly worse performance across the other metrics. Our Warped Diffusion achieves a sweet spot: it has the lowest warping error by a large margin and it still maintains competitive performance across all the other metrics. On the contrary, methods that are based solely on noise warping, such as GP Warping and the simple interpolation methods, lead to significant performance deterioration for a small improvement in temporal consistency.

\begin{table}[!htp]
    \centering
    \vspace{-1em}
    \caption{Mean-frame evaluation of super-resolution models for real videos.}
    \resizebox{\textwidth}{!}{
    \begin{tabular}{lccccccccc}
        \toprule
        Method & Warping Err $\downarrow$ & FID $\downarrow$ & Inception $\uparrow$ & CLIP Txt $\uparrow$ & CLIP Img $\uparrow$ & SSIM $\uparrow$   & LPIPS $\downarrow$ & MSE $\downarrow$ \\
        \midrule
        Fixed (indep) & 0.940${\pm}$\scriptsize{0.312} & 48.764${\pm}$\scriptsize{2.592} & 8.746${\pm}$\scriptsize{1.325} & 0.227${\pm}$\scriptsize{0.002} & 0.948${\pm}$\scriptsize{0.013} & \textbf{0.716\scriptsize{${\pm}$0.023}} & 0.188${\pm}$\scriptsize{0.018} & 0.005${\pm}$\scriptsize{0.001} \\ 
        Resample (indep) & 0.934${\pm}$\scriptsize{0.341} & \textbf{47.550\scriptsize{${\pm}$2.434}} & 8.879${\pm}$\scriptsize{1.337} & 0.229${\pm}$\scriptsize{0.002} & 0.948${\pm}$\scriptsize{0.011} & 0.708${\pm}$\scriptsize{0.021} & \textbf{0.183\scriptsize{${\pm}$0.018}} & \textbf{0.005\scriptsize{${\pm}$0.001}} \\
        Nearest (indep) & 1.048${\pm}$\scriptsize{0.381} & 67.078${\pm}$\scriptsize{6.359} & 8.608${\pm}$\scriptsize{1.339} & 0.228${\pm}$\scriptsize{0.002} & 0.943${\pm}$\scriptsize{0.009} & 0.683${\pm}$\scriptsize{0.022} & 0.227${\pm}$\scriptsize{0.031} & 0.007${\pm}$\scriptsize{0.002} \\
        Bilinear (indep) & 0.990${\pm}$\scriptsize{0.372} & 66.330${\pm}$\scriptsize{6.394} & 8.832${\pm}$\scriptsize{1.480} & 0.228${\pm}$\scriptsize{0.002} & 0.942${\pm}$\scriptsize{0.012} & 0.684${\pm}$\scriptsize{0.019} & 0.216${\pm}$\scriptsize{0.029} & 0.008${\pm}$\scriptsize{0.002} \\
        \midrule
        Fixed (gp) & 1.006${\pm}$\scriptsize{0.362} & 54.058${\pm}$\scriptsize{4.299} & 8.045${\pm}$\scriptsize{1.205} & 0.222${\pm}$\scriptsize{0.002} & 0.954${\pm}$\scriptsize{0.007} & 0.666${\pm}$\scriptsize{0.019} & 0.198${\pm}$\scriptsize{0.021} & 0.007${\pm}$\scriptsize{0.001} \\
        Resample (gp) & 0.974${\pm}$\scriptsize{0.308} & 54.778${\pm}$\scriptsize{3.942} & \textbf{9.471\scriptsize{${\pm}$1.566}} & 0.225${\pm}$\scriptsize{0.004} & \textbf{0.954\scriptsize{${\pm}$0.007}} & 0.661${\pm}$\scriptsize{0.025} & 0.209${\pm}$\scriptsize{0.020} & 0.006${\pm}$\scriptsize{0.001} \\
        Nearest (gp) & 0.975${\pm}$\scriptsize{0.383} & 79.743${\pm}$\scriptsize{10.835} & 8.896${\pm}$\scriptsize{1.462} & 0.224${\pm}$\scriptsize{0.002} & 0.939${\pm}$\scriptsize{0.004} & 0.637${\pm}$\scriptsize{0.015} & 0.243${\pm}$\scriptsize{0.044} & 0.009${\pm}$\scriptsize{0.003} \\
        Bilinear (gp) & 0.953${\pm}$\scriptsize{0.390} & 78.866${\pm}$\scriptsize{11.960} & 8.565${\pm}$\scriptsize{1.537} & 0.228${\pm}$\scriptsize{0.001} & 0.942${\pm}$\scriptsize{0.005} & 0.635${\pm}$\scriptsize{0.014} & 0.247${\pm}$\scriptsize{0.046} & 0.009${\pm}$\scriptsize{0.003} \\
        GP Warping & 0.812${\pm}$\scriptsize{0.337} & 75.763${\pm}$\scriptsize{11.555} & 8.291${\pm}$\scriptsize{1.168} & 0.225${\pm}$\scriptsize{0.002} & 0.941${\pm}$\scriptsize{0.006} & 0.653${\pm}$\scriptsize{0.016} & 0.226${\pm}$\scriptsize{0.043} & 0.008${\pm}$\scriptsize{0.003} \\
        \textbf{Warped Diffusion} (Ours) & \textbf{0.649\scriptsize{${\pm}$0.363}} & 58.189${\pm}$\scriptsize{6.322} & 8.882${\pm}$\scriptsize{1.704} & \textbf{0.235\scriptsize{${\pm}$0.003}} & 0.943${\pm}$\scriptsize{0.005} & 0.654${\pm}$\scriptsize{0.024} & 0.221${\pm}$\scriptsize{0.041} & 0.008${\pm}$\scriptsize{0.002} \\
        \bottomrule
    \end{tabular}}
    \vspace{-1em}
\label{tab:noise_warping_comp_videos}
\end{table}

\paragraph{Noise Warping Speed.} We measure the time needed for a single noise warping. Our GP Warping mechanism takes $39$ms per frame Wall Clock time, to produce the warping at $1024\times 1024$ resolution. This is $16\times$ faster than the reported $629$ms number in \cite{chang2023warped}. If we use batch parallelization, our method generates $1000$ noise warpings in just $46$ms (at the expense of extra memory).

\paragraph{No Warping?} A natural question is whether we can omit completely the noise warping scheme since equivariance is forced at inference time. We ran some preliminary experiments for super-resolution on real-videos and we found that omitting the warping significantly deteriorates the results when the number of sampling steps is low. We found that increasing the number of sampling steps makes the effect of the initial noise warping less significant, at the cost of increased sampling time. 

\section{Limitations}
\label{sec:limitations}
Our method has several limitations. First, the guidance term increases the sampling time, as detailed in the Appendix, Section \ref{sec:speed}. For reference, processing a 2-second video takes roughly 5 minutes on a single A-100 GPU. Second, even though in our experiments we observed a monotonic relation between the warping error in latent space and warping error in pixel space, it is possible that for some transformations the decoder of a Latent Diffusion Model might not be equivariant. We noticed that this is a common failure for text rendering, e.g. in \href{https://giannisdaras.github.io/warped_diffusion.github.io/assets/videos/3_output_latent_video.mp4}{this} latent video the model seems to be equivariant, but in the \href{https://giannisdaras.github.io/warped_diffusion.github.io/assets/videos/3_output_video.mp4}{pixel video} it is not. Third, the success of our method depends on the quality of the flow estimation -- inconsistent flow estimation between frames will lead to flickering artifacts. For real videos, there might be occlusions and the estimation of the flow map can be noisy. We observed that in such cases our method fails, especially for challenging tasks such as video inpainting. The correlations obtained by following the optical flow field obtained from real videos might lead to a distribution shift compared to the training distribution. For such extreme deformations, our method produces correlation artifacts. This has been observed in prior work (see \href{https://warpyournoise.github.io/docs/assets/videos/SuperRes/Bear/adv.mp4}{this video}), but it also appears in our setting (e.g. see \href{https://giannisdaras.github.io/warped_diffusion.github.io/assets/videos/216_output_video.mp4}{this video}). Finally, our method cannot work in a zero-shot manner since it requires a model that is trained with correlated noise.

\section{Conclusions}
\label{sec:conclusion}
Warped Diffusion is a novel framework for solving temporally correlated inverse problems with image diffusion models. It leverages a noise warping scheme based on Gaussian processes to propagate noise maps and it ensures equivariant generation through an efficient equivariance self-guidance technique. We extensively validated Warped Diffusion on temporally coherent inpainting and superresolution, where our approach outperforms relevant baselines both quantitatively and qualitatively. Importantly, in contrast to previous work~\cite{chang2023warped}, our method can be applied seamlessly also to \textit{latent} diffusion models, including state-of-the-art text-to-image models like SDXL~\cite{podell2024sdxl}. \looseness=-1

\section{Acknowledgements}
\label{sec:ack}
This research has been partially supported by NSF Grants AF 1901292, CNS 2148141, Tripods CCF 1934932, IFML CCF 2019844 and research gifts by Western Digital, Amazon, WNCG IAP, UT Austin Machine Learning Lab (MLL), Cisco and the Stanly P. Finch Centennial Professorship in Engineering. Giannis Daras has been partially supported by the Onassis Fellowship (Scholarship ID: F ZS 012-1/2022-2023), the Bodossaki Fellowship and the Leventis Fellowship.

{
\small
\bibliography{bibliography}
\bibliographystyle{plain}
}

\appendix

\section{Theoretical Results}
\label{app:theory}

\subsection{Convolutions and Equivariance}
\label{app:conv_equi}

To better understand \eqref{eq:equivariance}, we consider the following example. We will assume that $\xi : \R^2 \to \R $ is a scalar-valued field (grayscale image) defined on the whole plane. Furthermore, we let $G$ be given by a continuous convolution and $T_1^{-1}$ be a translation. In particular, for all $x \in \R^2$,
\[
G(\xi)(x) = \int_{\R^2} \kappa(x - y) \xi(y) \: \mathsf{d}y, \qquad T^{-1}_1(x) = x - a
\]
for some compactly supported kernel $\kappa: \R^2 \to \R$ and a direction $a \in \R^2$. We then have
\[
G \big ( \xi \circ T^{-1}_1 \big ) \big ( x \big ) = \int_{\R^2} \kappa(x-y) \xi(y - a) \: \mathsf{d}y = \int_{\R^2} \kappa \big ( (x-a) -y \big ) \xi \big ( y \big ) \: \mathsf{d}y = G \big ( \xi \big ) \big ( T_1^{-1}(x) \big )
\]
by the change of variables formula. This shows that $G$ is equivariant to all translations. This is a well-known property of the convolution and, in particular, it shows that convolutional neural networks are translation equivariant, noting that pointwise non-linearities will preserve this property. This example shows that a model can be equivariant with respect to certain deformations by architectural design. However, in realstic video modeling, optical flows are not know explicitly and can only be approximated numerically. It is therefore natural to instead build-in approximate equivariance into a model instead of enforcing it directly in the architecture. Our guidance procedure in Section~\ref{sec:diffusion_guidance} is an example such an approximate form of equivariance. 

\subsection{Tweedie's Formula}
\label{app:tweedie}

In Section~\ref{sec:diffusion_guidance}, we show a diffusion model can be trained and sampled from using Gaussian process noise instead of white noise.
Our result depends on the following lemma which is a simple generalization of Tweedie's formula.

\begin{lemma}
Let $x$ be a random variable with positive density $p_x \in C^1 (\R^k)$. Let $\sigma > 0$ and $z \sim \N (0,Q)$ for some positive definite matrix $Q \in \R^{k \times k}$ and assume that $x \perp z$. Define the random variable
\[y = x + \sigma z\]
and let $p_y \in C^\infty (\R^k)$ be the density of $y$. It holds that
\[\nabla_y \log p_y(y) = \frac{1}{\sigma^2} Q^{-1} \big ( \E [x|y] - y \big ).\]
\end{lemma}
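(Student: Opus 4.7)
The plan is to compute $\nabla_y \log p_y$ directly from the convolution representation of $p_y$. Since $x \perp z$, the density factors as
\[
p_y(y) = \int_{\R^k} p_x(\tilde{x}) \varphi(y - \tilde{x}) \, \mathsf{d}\tilde{x},
\]
where $\varphi$ is the density of $\sigma z \sim \N(0, \sigma^2 Q)$, namely
\[
\varphi(u) = \frac{1}{(2\pi)^{k/2} \sigma^k |Q|^{1/2}} \exp\!\left(-\tfrac{1}{2\sigma^2} u^\top Q^{-1} u\right).
\]
A direct differentiation gives the key identity
\[
\nabla_y \varphi(y - \tilde{x}) = -\tfrac{1}{\sigma^2} Q^{-1}(y - \tilde{x}) \, \varphi(y - \tilde{x}),
\]
which turns the Gaussian factor into the vehicle that produces both the $Q^{-1}$ prefactor and the residual $\E[x|y] - y$.

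Next, I would differentiate under the integral sign to obtain
\[
\nabla_y p_y(y) = -\tfrac{1}{\sigma^2} Q^{-1} \int_{\R^k} (y - \tilde{x}) \, p_x(\tilde{x}) \, \varphi(y - \tilde{x}) \, \mathsf{d}\tilde{x}.
\]
Recognizing the integrand as $(y - \tilde{x})$ times the joint density $p_{x,y}(\tilde{x}, y) = p_x(\tilde{x}) \varphi(y - \tilde{x})$, I split the integral into two pieces: $\int \tilde{x} \, p_{x,y}(\tilde{x}, y) \, \mathsf{d}\tilde{x} = p_y(y) \E[x|y]$ by the definition of conditional expectation, and $\int y \, p_{x,y}(\tilde{x}, y) \, \mathsf{d}\tilde{x} = y \, p_y(y)$. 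This yields
\[
\nabla_y p_y(y) = \tfrac{1}{\sigma^2} Q^{-1} \big(\E[x|y] - y\big) p_y(y).
\]
Dividing by $p_y(y)$, which is strictly positive since it is the convolution of a nonnegative density with a strictly positive Gaussian, and using $\nabla_y \log p_y = p_y^{-1} \nabla_y p_y$, gives the desired formula. Smoothness of $p_y$ is automatic from the convolution with a $C^\infty$ Gaussian.

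The main technical obstacle is justifying the interchange of gradient and integral. For this, I would invoke the dominated convergence theorem by showing that, on any compact neighborhood of $y$, the map $\tilde{x} \mapsto \|(y - \tilde{x}) Q^{-1} p_x(\tilde{x}) \varphi(y - \tilde{x})\|$ admits an integrable envelope; the super-exponential Gaussian decay of $\varphi$ combined with integrability of $p_x$ delivers this envelope straightforwardly, though it is the one piece of the argument requiring genuine care. Everything else amounts to the identity $\nabla \varphi = -\sigma^{-2} Q^{-1}(\,\cdot\,) \varphi$ and the definition of conditional expectation.
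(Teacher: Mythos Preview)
Your proof is correct and follows essentially the same route as the paper: write $p_y$ as a convolution with the Gaussian conditional $p(y|x)$, differentiate under the integral, use $\nabla_y \log p(y|x) = \sigma^{-2}Q^{-1}(x-y)$, and identify the conditional expectation. The only cosmetic difference is that the paper invokes Bayes' theorem $p(y|x)p_x(x)=p(x|y)p_y(y)$ to extract $\E[x|y]$, whereas you work directly with the joint density; your justification of the interchange via a Gaussian-dominated envelope is in fact more careful than the paper's brief appeal to $p_x\in C^1$.
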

\begin{proof}
    First note that by the chain rule,
    \begin{equation*}
        \nabla_y \log p_y (y) = \frac{1}{p_y(y)} \nabla_y p_y (y) = \frac{1}{p_y(y)} \nabla_y \int_{\R^k} p(y,x) \: \mathsf{d}x
    \end{equation*}
    where $p(y,x)$ denotes the joint density of $(y,x)$. Let $p(y|x)$ denote the Gaussian density of the conditional $y|x$. Since $p_x \in C^1(\R^k)$,
    \begin{equation*}
        \nabla_y \int_{\R^k} p(y,x) \: \mathsf{d}x = \int_{\R^k} \nabla_y p(y|x) p_x(x) \: \mathsf{d}x.
    \end{equation*}
    Therefore, by the chain rule,
    \begin{equation*}
        \nabla_y \log p_y (y) = \frac{1}{p_y(y)} \int_{\R^k} p(y|x) p_x(x) \nabla_y \log p(y|x) \: \mathsf{d}x. 
    \end{equation*}
    Since $p(y|x)$ is the density of $\N(x,\sigma^2 Q)$, a direct calculations shows that
    \[\nabla_y \log p(y|x) = \frac{1}{\sigma^2} Q^{-1} \big ( x - y \big ).\]
    Furthermore, Bayes' theorem implies
    \[p(y|x) p_x(x) = p(x|y) p_y(y).\]
    Therefore,
    \[\nabla_y \log p_y (y) = \frac{1}{\sigma^2} \int_{\R^k} Q^{-1} \big ( x - y \big ) p(x|y) \: \mathsf{d}x =  \frac{1}{\sigma^2} Q^{-1} \big ( \E [x|y] - y \big )\]
    as desired.
\end{proof}

\subsection{Flow Equivariance}
\label{app:flow_equivariance}

In Section~\ref{sec:diffusion_guidance}, we claim that if the score network $h_\theta$ is equivarient with respect to a deformation $T^{-1}$, then the Euler scheme approximation of the map $u(\tau) \mapsto u(0)$ is equivarient with respect to $T^{-1}$. It is easy to see that this results holds so long as it holds for the single step $u_t \mapsto u_{t - \Delta t}$ defined by \eqref{eq:euler_scheme}. We will assume that $h_\theta$ safisfies \eqref{eq:discrete_score_equivariance}
written as
\[h_\theta (u_t \circ T^{-1}, t) = h_\theta (u_t, t) \circ T^{-1}.\]
We make sense of this equation by using RFF to define $u_t$ as a function on the plane and similarly bilinear interpolation to define $h_\theta (u_t,t)$ as a function. It follows by linearity of composition that
\begin{align*}
    u_{t-\Delta t} \circ T^{-1} &= u_t \circ T^{-1} - \Delta t \frac{\dot{\sigma} (t)}{\sigma (t)} \big (h_\theta(u_t,t) \circ T^{-1} - u_t \circ T^{-1} \big ) \\
    &= u_t \circ T^{-1} - \Delta t \frac{\dot{\sigma} (t)}{\sigma (t)} \big (h_\theta(u_t \circ T^{-1},t) - u_t \circ T^{-1} \big )
\end{align*}
which is the requisite equivariance of the map $u_t \mapsto u_{t - \Delta t}$.

\section{Gaussian Processes}
\label{app:gp}

A probability measure $\eta$ on $H$ is called Gaussian if there exists an element $m \in H$ and a self-adjoint, non-negative, trace-class operator $Q : H \to H$ such that, for all $h, h' \in H$,
\[\langle h, m \rangle  = \int_H \langle h, f \rangle \; \mathsf{d}\eta (f), \quad \langle Qh, h' \rangle = \int_H \langle h, f-m \rangle \langle h', f-m \rangle \; \mathsf{d}\eta (f), \]
where $\langle \cdot, \cdot \rangle$ denotes the inner product on $H$. The element $m$ is called the \textit{mean} while the operator $Q$ is called the \textit{covariance}. It is immediate from this  definition that white noise is not included since the identity operator is not trace-class on any infinite dimensional space. This definition ensures that any realization of a random variable $\xi \sim \eta$ is almost surely an element of $H$.
When the domain $D$ of the elements of $H$ is a subset of the real line, $\eta$ is often called a \textit{Gaussian process}. We continue to use this terminology even when $D$ is a subset of a higher dimensional space i.e. $\R^2$ but remark that the nomenclature \textit{Gaussian random field} is sometimes preferred.  

Since we working on a separable space, each such field on $H$ has associated to it a unique reproducing kernel Hilbert space \cite[Theorem 2.9]{da2014stochastic} which is associated to a unique positive definite kernel \cite{aronszajn1950theory}. In particular, there exists a positive definite function $\kappa : D \times D \to \R$ for which $Q$ is its associated integral operator. It follows that a Gaussian process can be uniquely identified with a positive definite kernel. Sampling and conditioning this process can then be accomplished via the kernel matrix. 

To make this explicit, suppose that $X = \{x_1,\dots,x_n\} \subset D$ and $Y = \{y_1,\dots,y_m\} \subset D$ are two sets of points in $D$. We will slightly abuse notation and write 
\[Q(X,Y)_{ij} \coloneqq \kappa (x_i, y_j), \qquad i=1,\dots,n \text{ and } j=1,\dots, m\]
for the kernel matrix between $X$ and $Y$ and similarly $Q(Y,X), Q(X,X), Q(Y,Y)$. Suppose that $\xi \sim \eta$ is a random variable from the Gaussian process with kernel $\kappa$ and mean zero. To sample a realization of $\xi$ on the points $X$, we sample the finite dimensional Gaussian $\N \big( 0, Q(X,X) \big )$.
This can be written as
\[\xi(X) = Q(X,X)^{1/2} Z\]
where $Z \sim \N (0, I_n)$. Suppose now that the points in $Y$ are distinct from those in $X$ and we want to sample $\xi$ on $Y$ given the realization $\xi(X)$. 
This can be done by conditioning \cite{rasmussen2005gaussian}
\[\xi(Y) | \xi(X) \sim \N \big ( Q(Y,X) Q(X,X)^{-1} \xi(X), Q(Y,Y) - Q(Y,X) Q(X,X)^{-1} Q(X,Y)  \big ).\]

While the above formulas fully characterize sampling $\xi$, working with them can be computationally burdensome. It is therefore of interest to consider a different viewpoint on Gaussian processes, in particular, through the Karhunen–Lo{\`e}ve expansion. The spectral theorem implies that $Q$ possesses a full set of eigenfunctions $Q_j \phi_j = \lambda_j \phi_j$ for $j=1,2,\dots$ with some decaying sequence of eigenvalues $\lambda_1 \geq \lambda_2 \geq \dots$. The random variable $\xi \sim \N (0, Q)$ can be written as
\[\xi = \sum_{j=1}^\infty \sqrt{\lambda_j} \chi_j \phi_j\]
where $\chi_j \sim \N (0,1)$ is an i.i.d. sequence and the right hand side sum converges almost surely in the norm of $H$ \cite{da2014stochastic}. By truncating this sum to a finite number of terms, computing realizations of $\xi$ becomes much more computationally manageable. This inspires the random features approach to Gaussian processes which is the basis of our computational method; for precise details, see \cite{rasmussen2005gaussian, rahimi2007random, wilson2020efficiently}.

\section{Brownian Bridge Interpolation}
\label{app:brownian_bridge}

We show in Section~\ref{sec:white_noise} that a white noise process is not compatible with the idea of using a generative model to interpolate deformed functions. A potential way of dealing with this issue is to treat the original realizations of the white noise $\xi(E_k)$ as the fixed nodal points of a function $\xi$ and obtain the rest of the values via interpolation. It is shown in \cite{chang2023warped} that common forms of interpolation yield a conditional distribution $\xi \big ( T^{-1}(E_k) \big ) | \xi(E_k)$ that is too dissimilar from the training distribution $\mathcal \N (0,I_k)$ and thus the generative model produces blurry or disfigured images. 

Therefore \cite{chang2023warped} proposes a stochastic interpolation method which has the property that, for a new point $x^* \not \in E_k$, the distribution of $\xi (x^*)$ marginalized over the joint distribution $\big ( \xi(E_k), \xi (x^*) \big )$ follows $\mathcal \N(0,1)$. This is most easily seen in one spatial dimension with $k = 2$ points. Suppose that $D = [0,1]$ and let $a, b \sim \N(0,1)$ be two independent random variables. Consider a Gaussian process on $D$ with kernel function $\kappa (x,y) = 1 - |x-y|$ and suppose that $\xi$ is distributed according to this GP conditioned on $\xi (0) = a$ and $\xi(1) = b$. A straightforward calculation shows that, for any $x^* \in (0,1)$,
\[\xi(x^*) = (1-x^*) a + x^* b + \sqrt{2 x^*(1 - x^*)}z\]
for $z \sim \N (0,1)$ independent of $(a,b)$. This is simply the Brownian bridge connecting $a$ to $b$. Remarkably, the marginal distribution of $\xi (x^*)$ over the joint $(\xi(x^*),a,b)$ is $\N (0,1)$ independently of $x^*$. However, the conditional distribution is
\[\xi(x^*)|a,b = \N \big ( (1-x^*) a + x^* b,2 x^*(1 - x^*) \big ) \]
which is not $\N (0,1)$ for all $x^* \in (0,1)$. In \cite[Section 2.2]{chang2023warped}, it is proposed that such Brownian bridges are used between any two pair of pixels, yielding a stochastic interpolation method given by a sequence of such independent GPs. However, from the point of view of using a generative model that is pre-trained on $\N (0,I_k)$, it is not of interest that the marginal distribution of $\xi (x^*)$ is $\N (0,1)$ but rather that the conditional $\xi(x^*)|a,b$ is $\N (0,1)$. As we have seen, this is not the case for the method of \cite{chang2023warped} and, in fact, it will only ever be the case for white noise processes as discussed in Section~\ref{sec:white_noise}. Therefore, no matter what method is used, there will always be a distribution shift to the model input induced by the deformation $T^{-1}$. A well chosen noise process will simply try to minimize this shift as much a possible. 

The work \cite{chang2023warped} proposes to use diffusion models trained on discrete inputs distributed according to $\N (0,I_k)$ and computes conditional distributions $\xi \big ( T^{-1}(E_k) \big ) | \xi (E_k)$ using the stochastic interpolation method described above, generalized to two dimensions. We, instead, propose to use a Gaussian process $\N (0, Q)$, as described in Section~\ref{sec:grf} and compute $\xi \big ( T^{-1}(E_k) \big ) | \xi (E_k)$ by conditioning this process which amounts to simply evaluating the RFF projection. It is our numerical experience that this better preservers the qualitative properties of the input distribution for large deformations. We leave the exploration of a process best suited for this task as important future work.

\section{Additional Results}
In this section, we provide additional results that did not fit in the main paper. We visualize the difference between independent noise and noise from our GP in Figure \ref{fig:noise_visuals}. We present inpainting results from our SDXL inpainting model fine-tuned with GP noise in Figure \ref{fig:inp_visuals}. We present super-resolution results from our SDXL super-resolution model fine-tuned with GP noise in Figure \ref{fig:super_res_visuals}. We further present warping errors with respect to the previous frame in Figure \ref{fig:noise_wp_comparisons_inpainting_part_2} for the inpainting results and warping errors for super-resolution for real videos in Figure \ref{fig:noise_wp_comparisons_video}. Finally, we present additional comparisons for super-resolution in Figure \ref{fig:super_res_integer_translation_comparions}.

\begin{figure}
    \centering
    \begin{subfigure}[b]{0.48\textwidth}
        \centering
        \includegraphics[width=\textwidth]{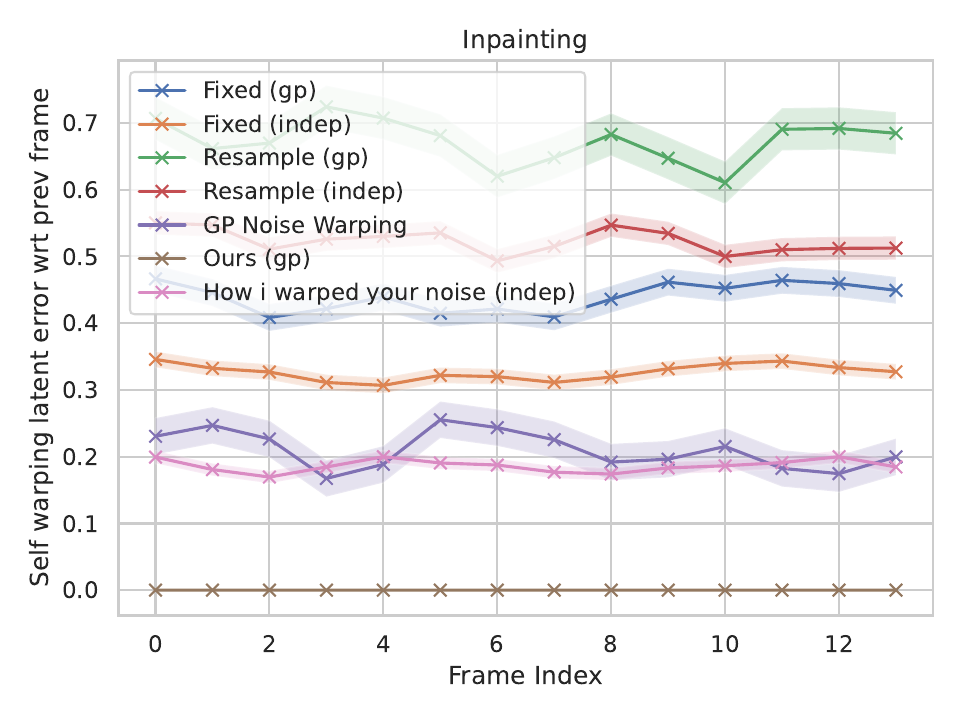}
        \caption{Warping error w.r.t. previously generated frame in latent space.}
        \label{fig:inp_noise_wp_prev_latent}
    \end{subfigure}
    \hfill
    \begin{subfigure}[b]{0.48\textwidth}
        \centering
        \includegraphics[width=\textwidth]{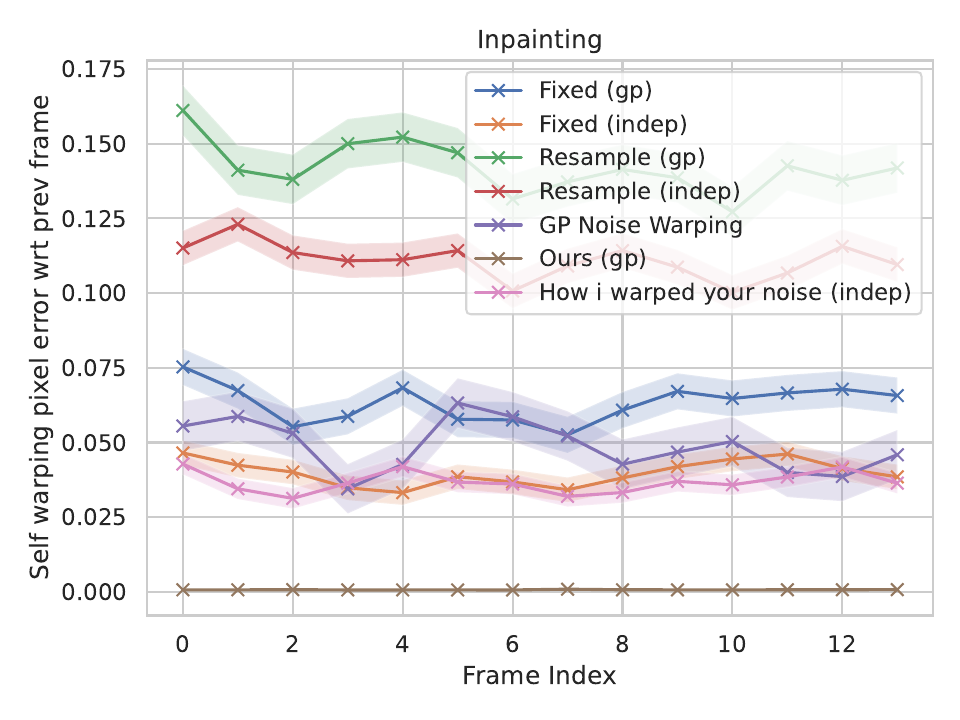}
        \caption{Warping error w.r.t. previously generated frame in pixel space.}
        \label{fig:inp_noise_wp_prev_pixel}
    \end{subfigure}
    \caption{Warping errors w.r.t. previously generated frame in latent and pixel space for the inpainting task as we shift the input frame.}
    \label{fig:noise_wp_comparisons_inpainting_part_2}
\end{figure}

\begin{figure}[!htp]
    \centering
    \begin{subfigure}[b]{0.48\textwidth}
        \centering
        \includegraphics[width=\textwidth]{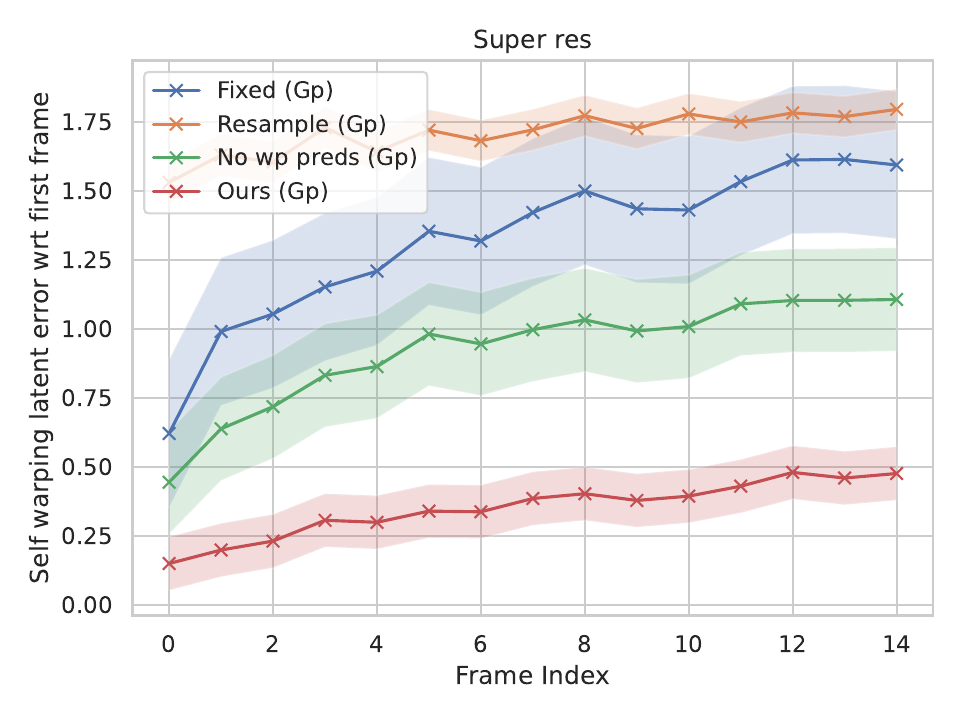}
        \caption{Warping error w.r.t. first generated frame in latent space.}
        \label{fig:video_noise_wp_first_latent}
    \end{subfigure}
    \hfill
    \begin{subfigure}[b]{0.48\textwidth}
        \centering
        \includegraphics[width=\textwidth]{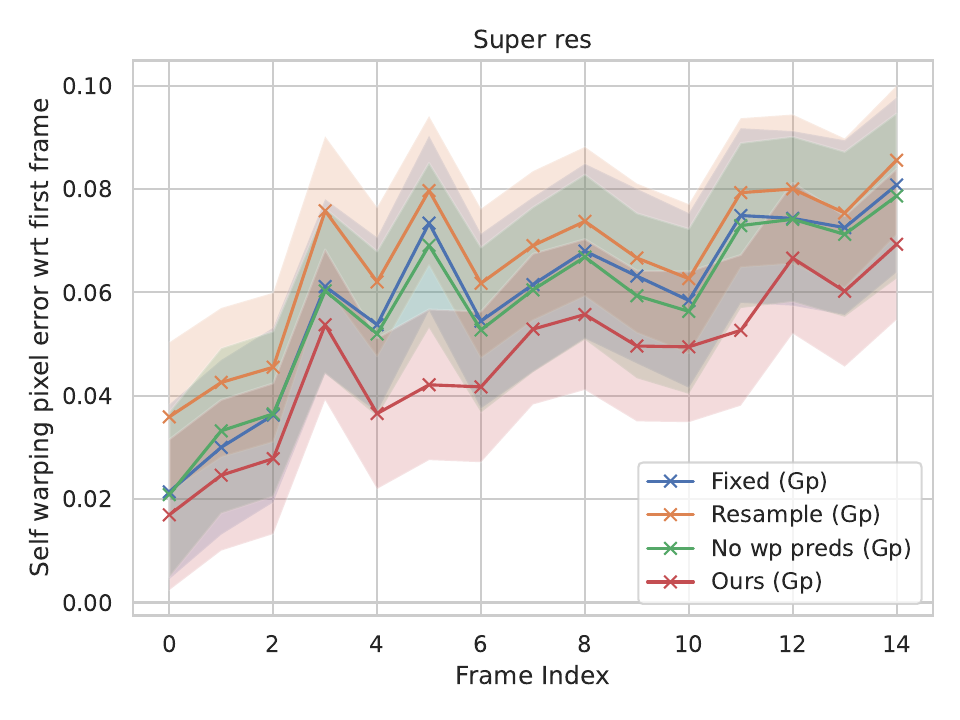}
        \caption{Warping error w.r.t. first generated frame in pixel space.}
        \label{fig:video_noise_wp_first_pixel}
    \end{subfigure}
    \vspace{0.25cm} %
    \begin{subfigure}[b]{0.48\textwidth}
        \centering
        \includegraphics[width=\textwidth]{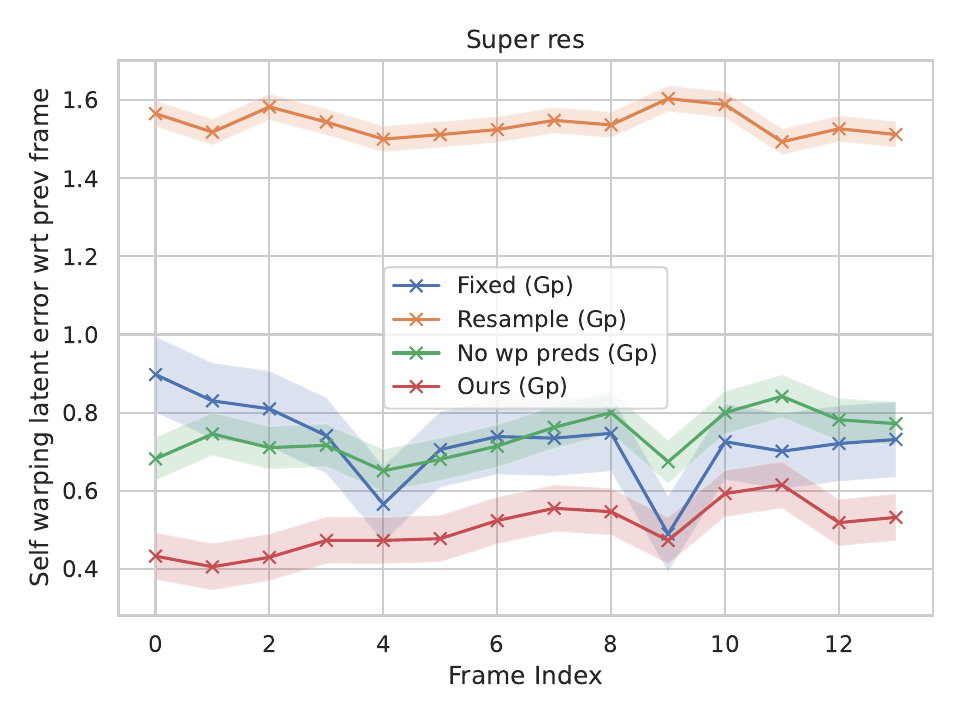}
        \caption{Warping error w.r.t. previously generated frame in latent space.}
        \label{fig:video_noise_wp_prev_latent}
    \end{subfigure}
    \hfill
    \begin{subfigure}[b]{0.48\textwidth}
        \centering
        \includegraphics[width=\textwidth]{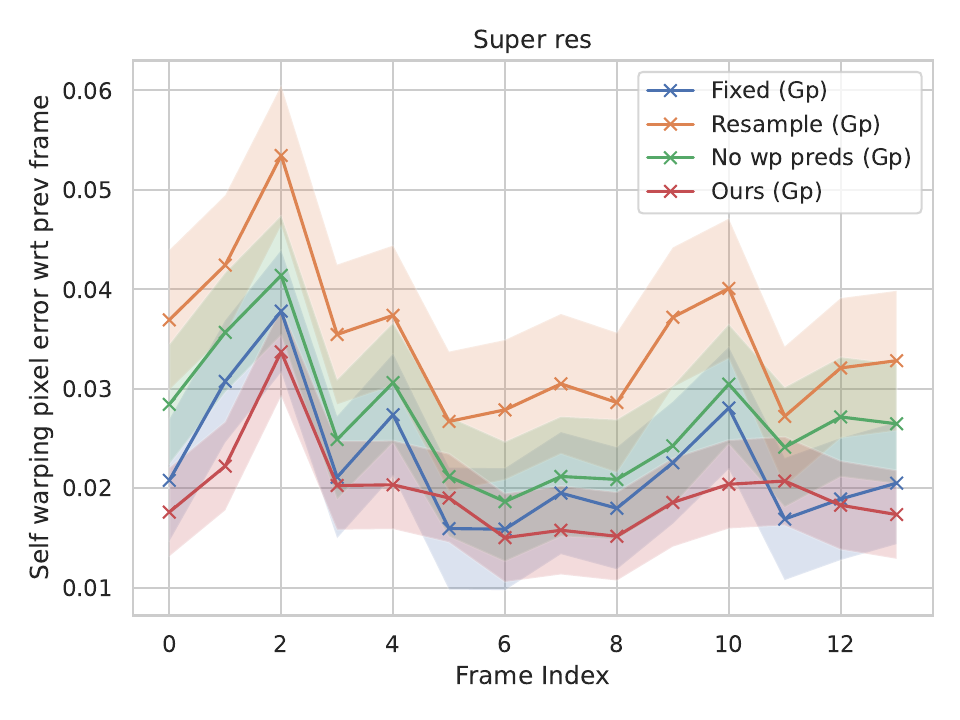}
        \caption{Warping error w.r.t. previously generated frame in pixel space.}
        \label{fig:video_noise_wp_prev_pixel}
    \end{subfigure}
    \caption{Warping errors w.r.t. first generated frame (top-row) and prev. generated frame (bottom row) for the $8\times$ super-resolution task for real videos.}
    \label{fig:noise_wp_comparisons_video}
\end{figure}

\begin{figure}[!htp]
    \centering
    \begin{subfigure}[b]{0.48\textwidth}
        \centering
        \includegraphics[width=0.8\textwidth]{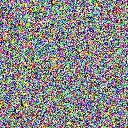}
        \caption{Independent noise realization.}
        \label{fig:indep_noise}
    \end{subfigure}
    \hfill
    \begin{subfigure}[b]{0.48\textwidth}
        \centering
        \includegraphics[width=0.8\textwidth]{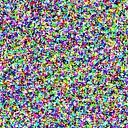}
        \caption{Gaussian Process noise realization.}
        \label{fig:gp_noise}
    \end{subfigure}
    \vspace{0.25cm} %
    \caption{Visualization of independent noise and noise from a Gaussian Process.}
    \label{fig:noise_visuals}
\end{figure}

\begin{figure}[!htp]
    \centering
    \begin{subfigure}[b]{0.48\textwidth}
        \centering
        \includegraphics[width=\textwidth]{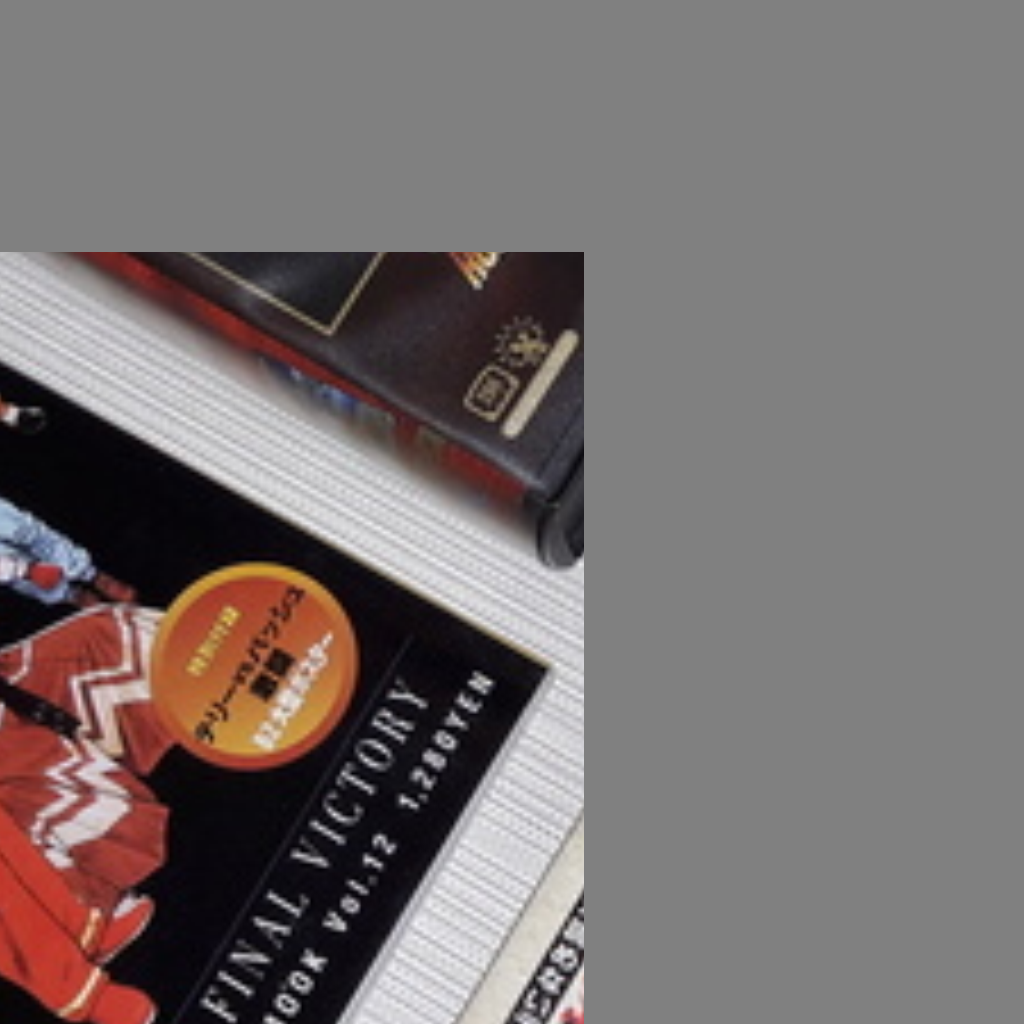}
        \label{fig:inp_visual_input_1}
    \end{subfigure}
    \hfill
    \begin{subfigure}[b]{0.48\textwidth}
        \centering
        \includegraphics[width=\textwidth]{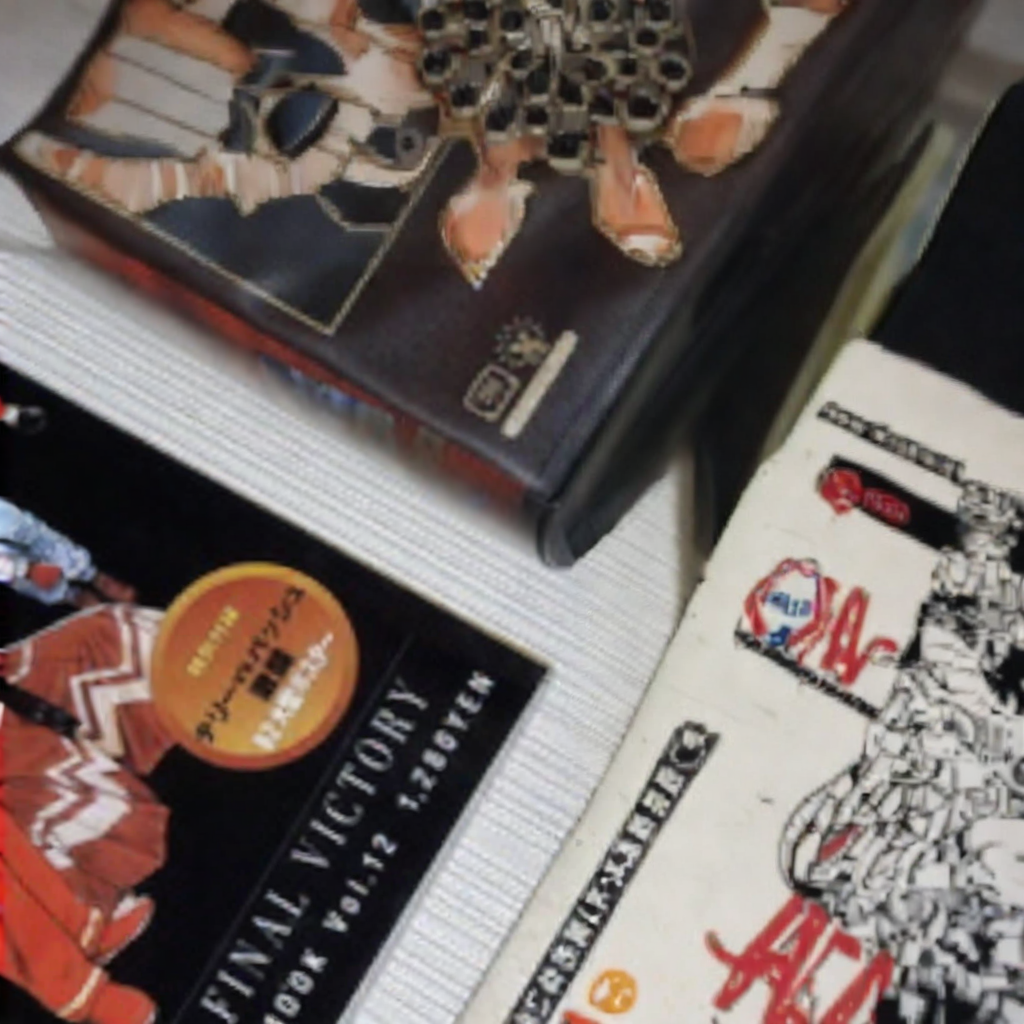}
        \label{fig:inp_visual_output_1}
    \end{subfigure}
    \begin{subfigure}[b]{0.48\textwidth}
        \centering
        \includegraphics[width=\textwidth]{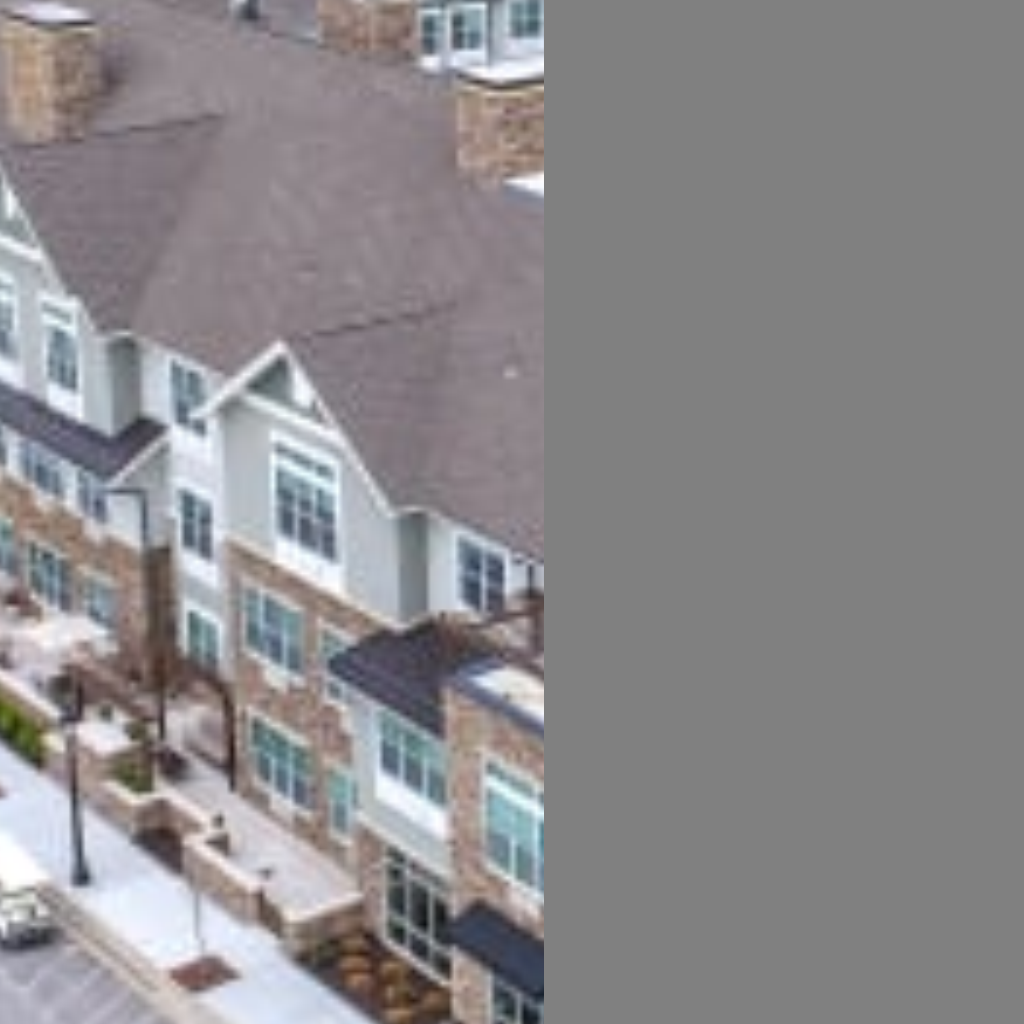}
        \label{fig:inp_visual_input_2}
    \end{subfigure}
    \hfill
    \begin{subfigure}[b]{0.48\textwidth}
        \centering
        \includegraphics[width=\textwidth]{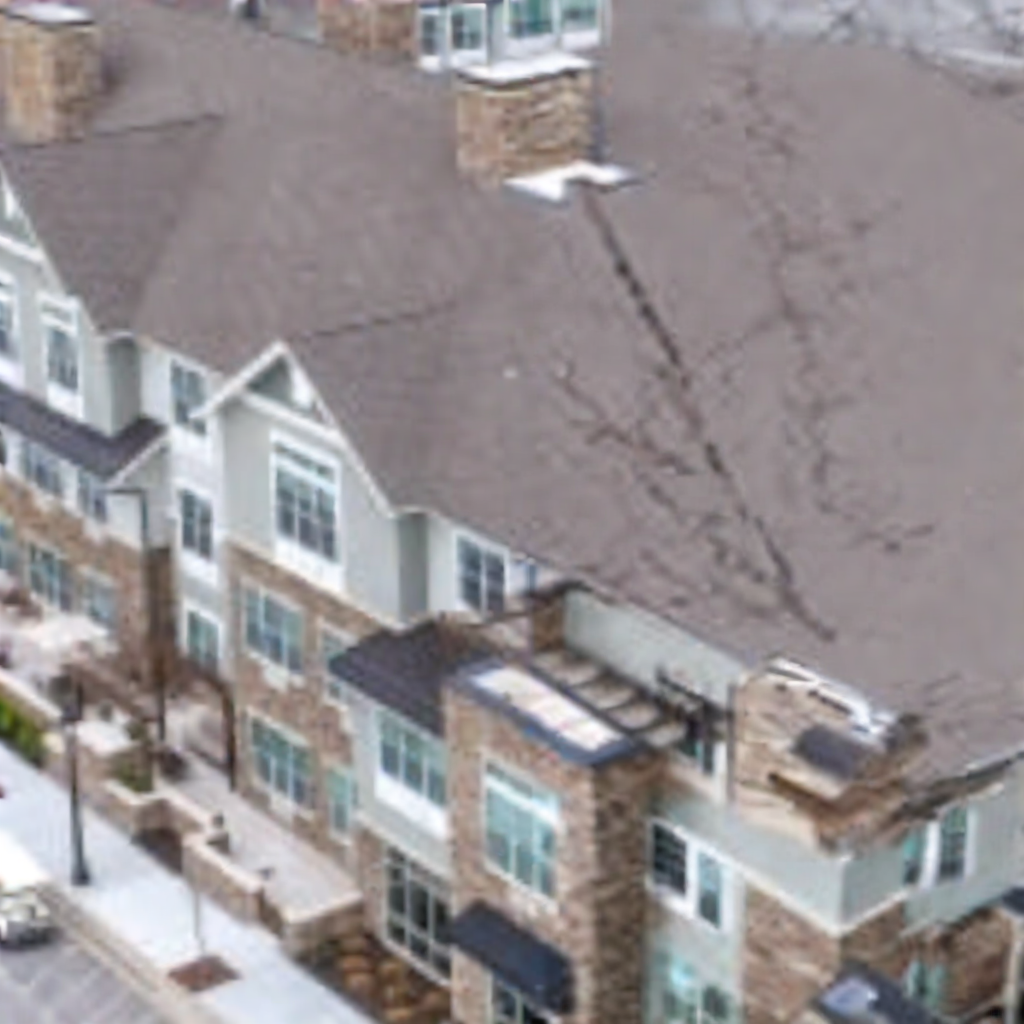}
        \label{fig:inp_visual_output_2}
    \end{subfigure}
    \begin{subfigure}[b]{0.48\textwidth}
        \centering
        \includegraphics[width=\textwidth]{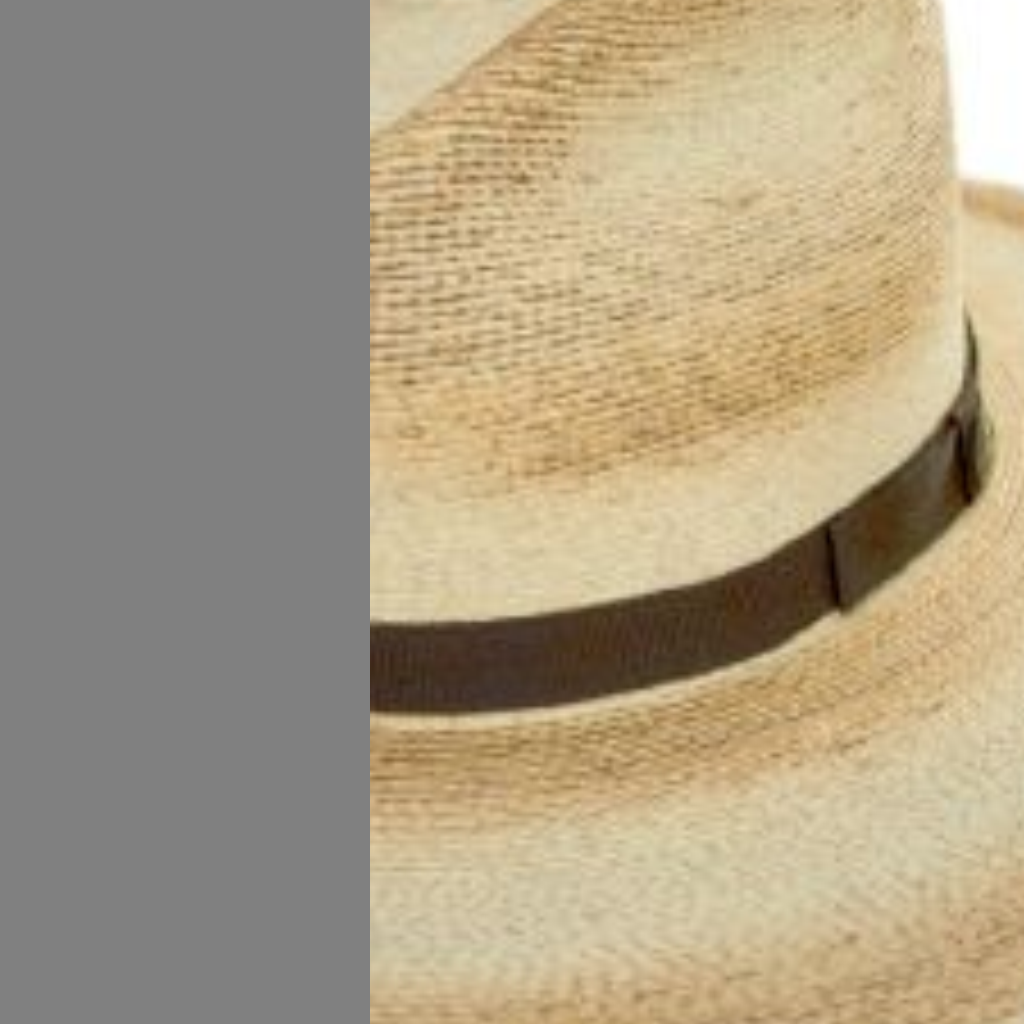}
        \label{fig:inp_visual_input_3}
    \end{subfigure}
    \hfill
    \begin{subfigure}[b]{0.48\textwidth}
        \centering
        \includegraphics[width=\textwidth]{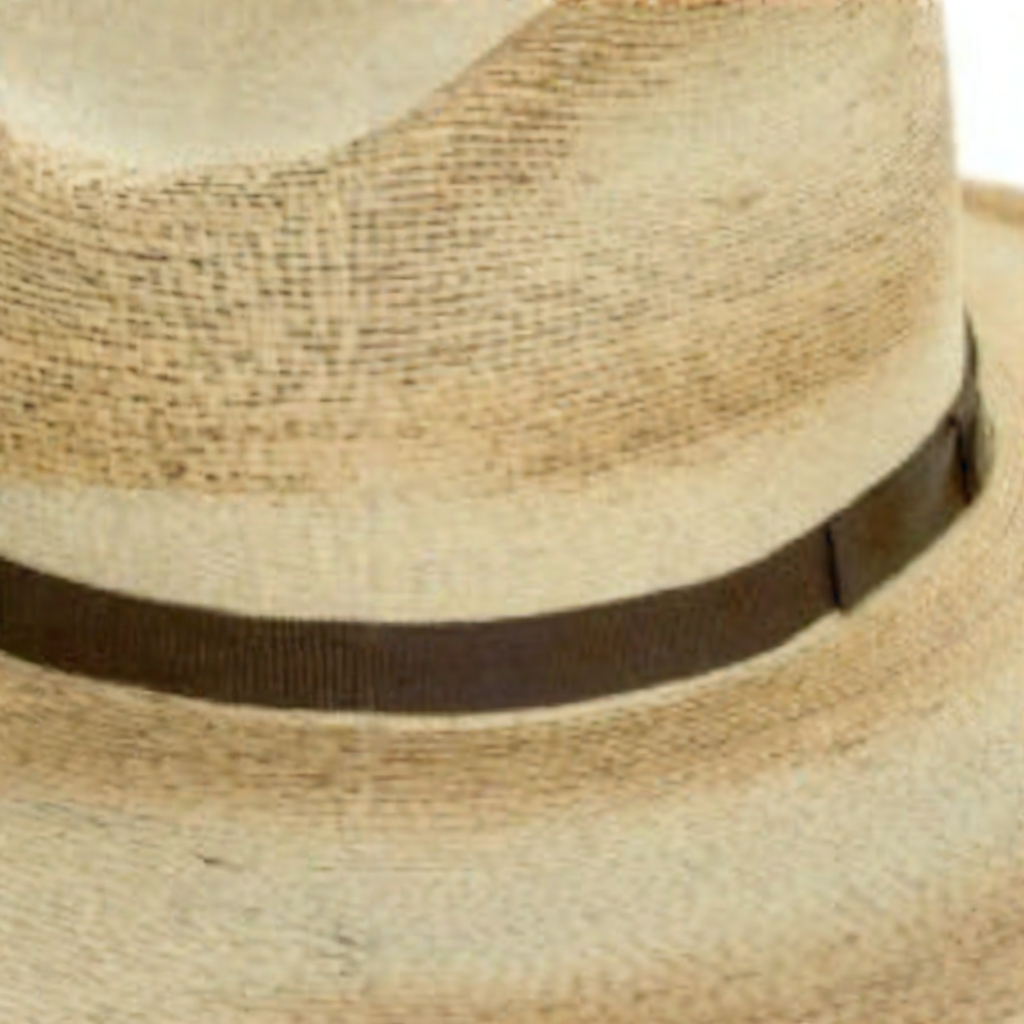}
        \label{fig:inp_visual_output_3}
    \end{subfigure}
\end{figure}
\begin{figure}[htbp]
    \ContinuedFloat
    \begin{subfigure}[b]{0.48\textwidth}
        \centering
        \includegraphics[width=\textwidth]{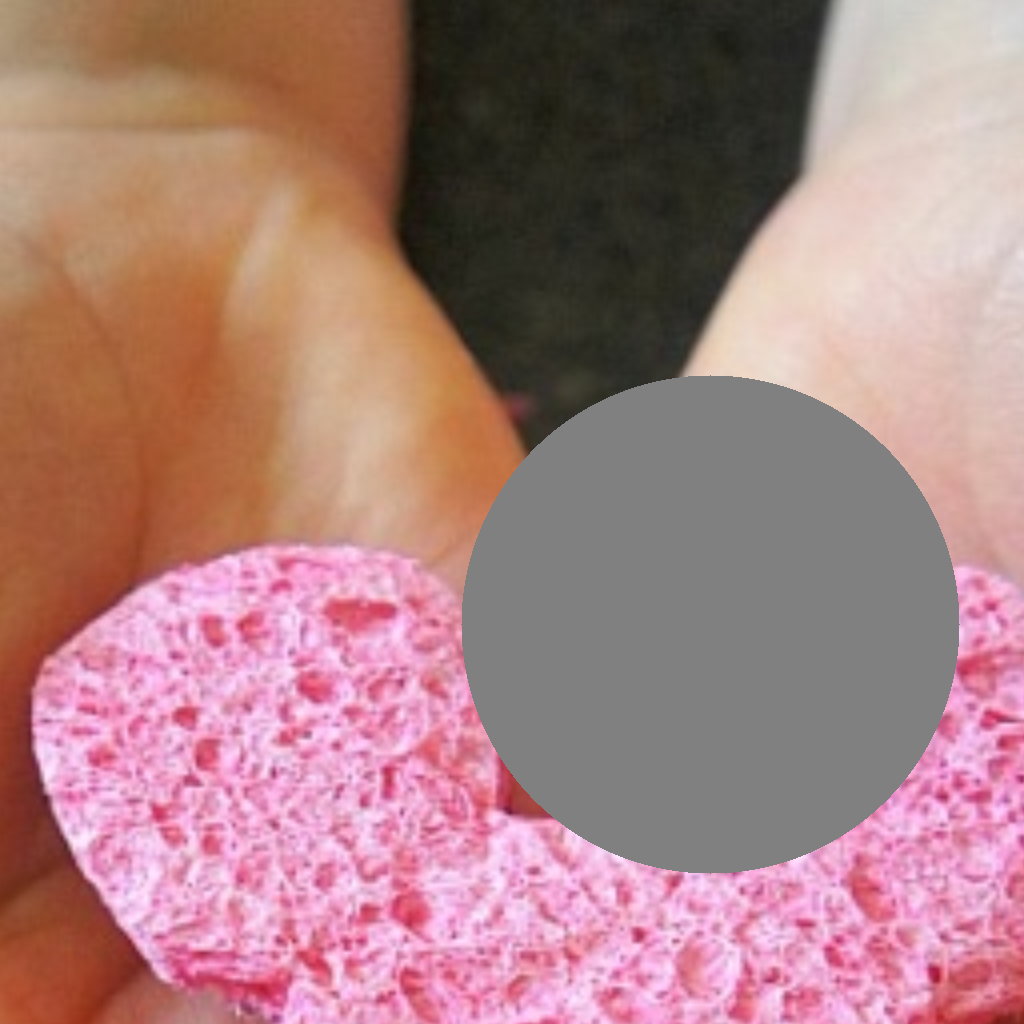}
        \label{fig:inp_visual_input_4}
    \end{subfigure}
    \hfill
    \begin{subfigure}[b]{0.48\textwidth}
        \centering
        \includegraphics[width=\textwidth]{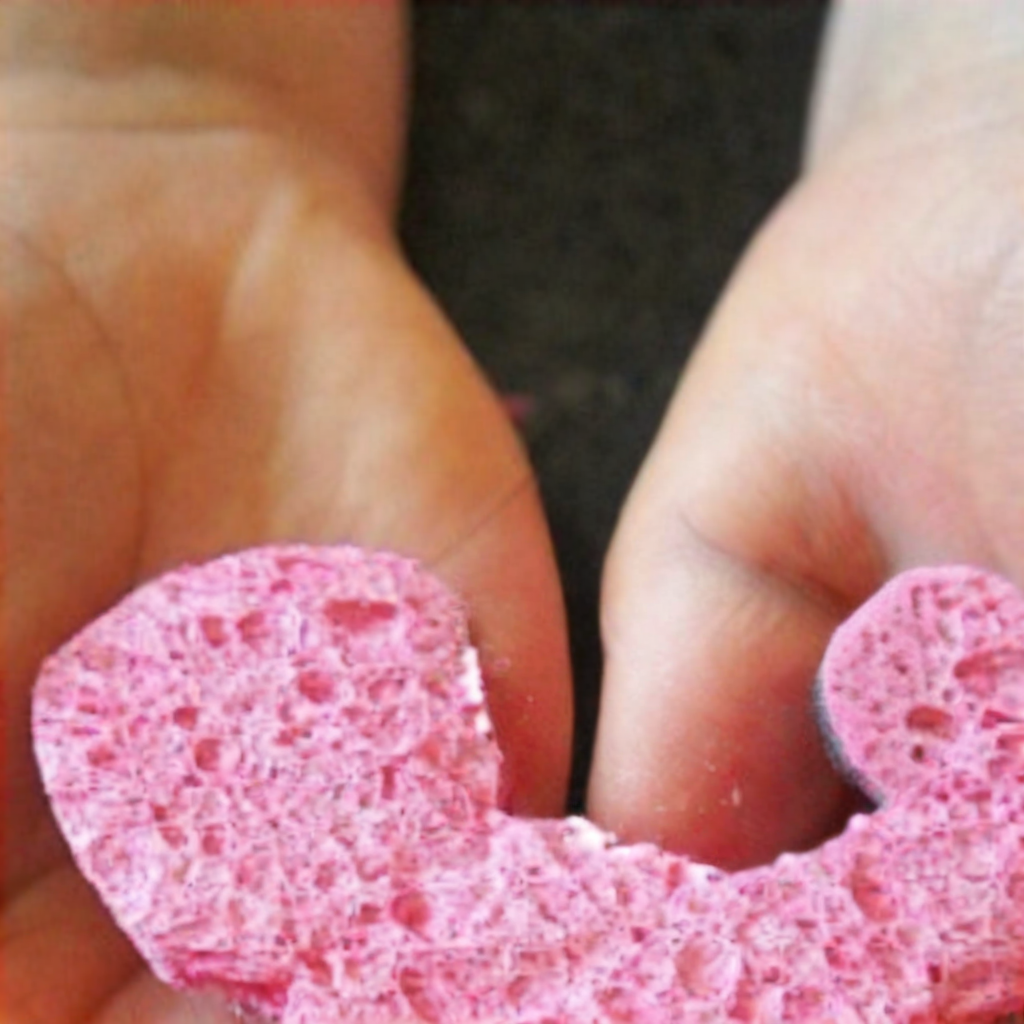}
        \label{fig:inp_visual_output_4}
    \end{subfigure}
    \begin{subfigure}[b]{0.48\textwidth}
        \centering
        \includegraphics[width=\textwidth]{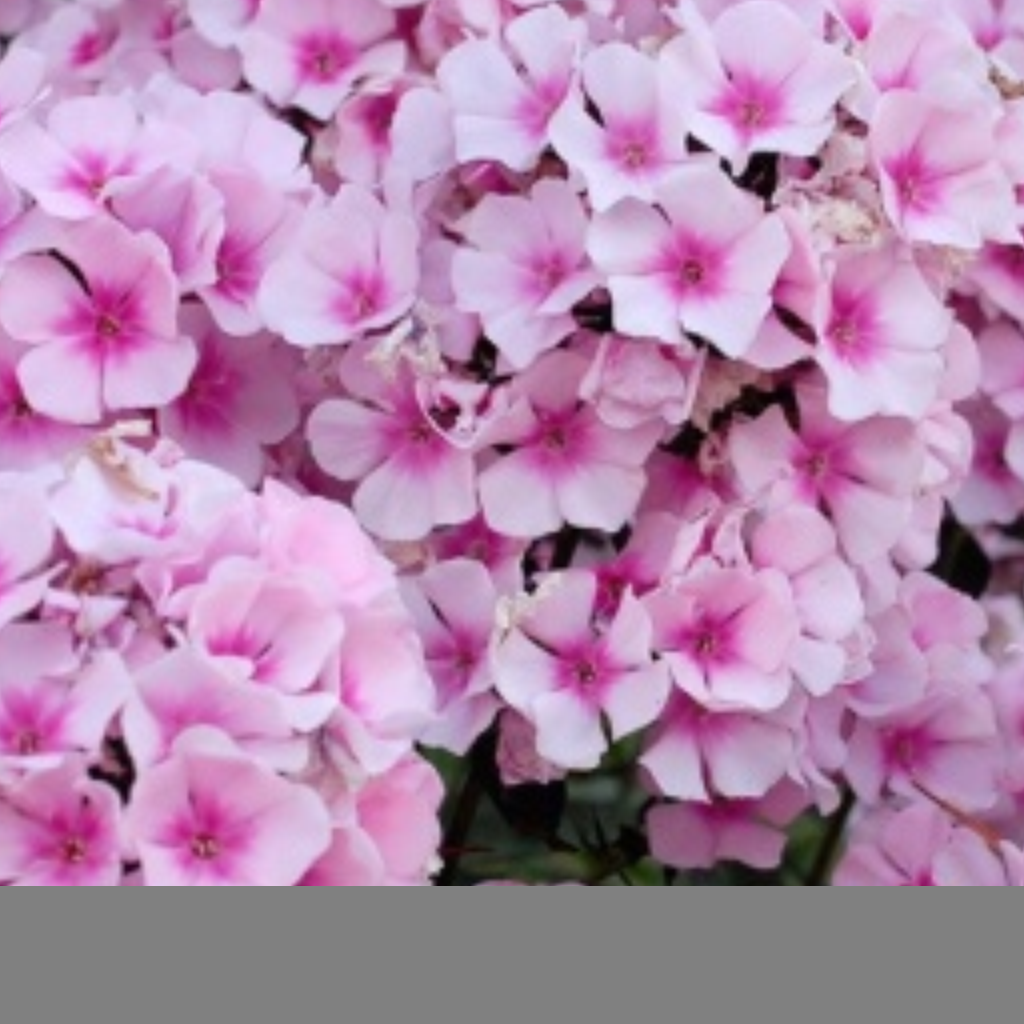}
        \label{fig:inp_visual_input_5}
    \end{subfigure}
    \hfill
    \begin{subfigure}[b]{0.48\textwidth}
        \centering
        \includegraphics[width=\textwidth]{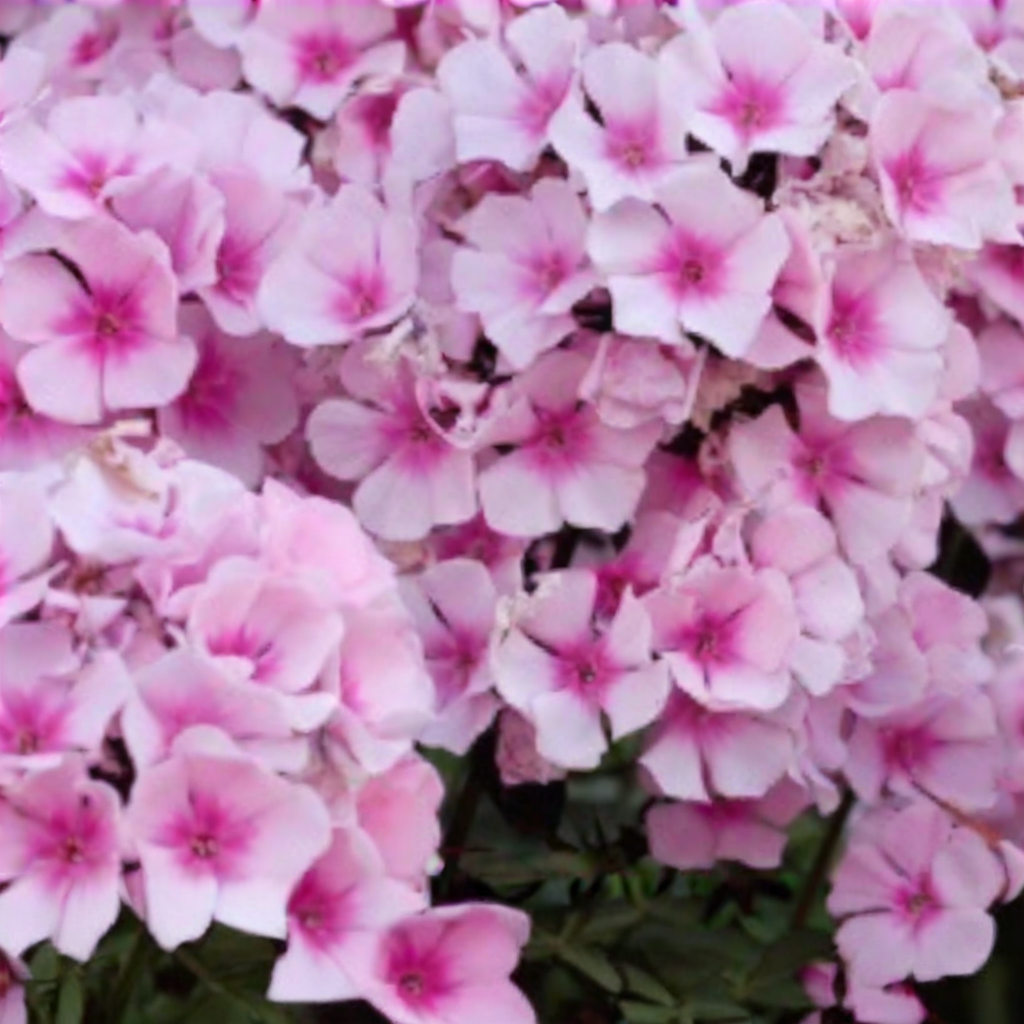}
        \label{fig:inp_visual_output_5}
    \end{subfigure}
    \caption{Inpainting examples. Left column: inputs by randomly masking images from the COYO dataset. Right column: inpainting outputs from our SDXL fine-tuned model with correlated noise.}
    \label{fig:inp_visuals}
\end{figure}

\begin{figure}[!htp]
    \centering
    \begin{subfigure}[b]{0.48\textwidth}
        \centering
        \includegraphics[width=\textwidth]{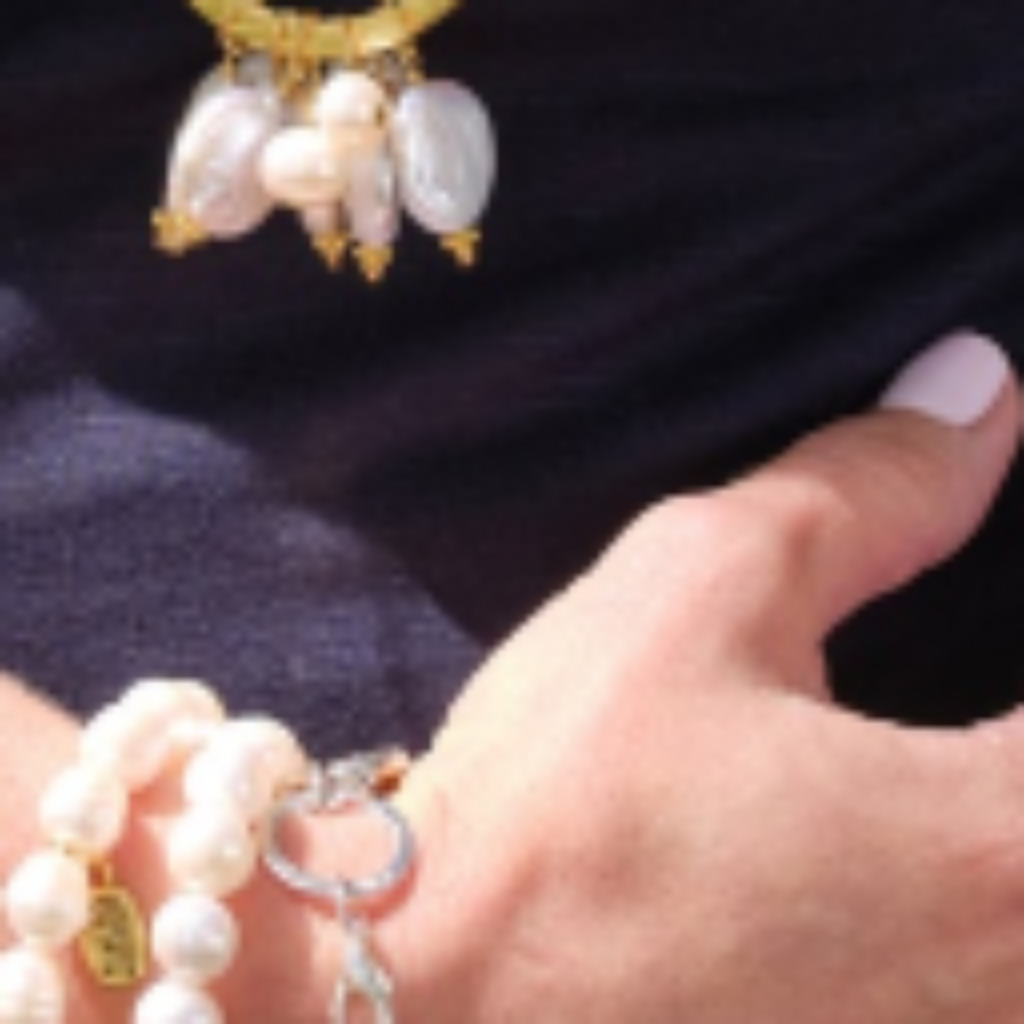}
        \label{fig:super_res_visual_input_1}
    \end{subfigure}
    \hfill
    \begin{subfigure}[b]{0.48\textwidth}
        \centering
        \includegraphics[width=\textwidth]{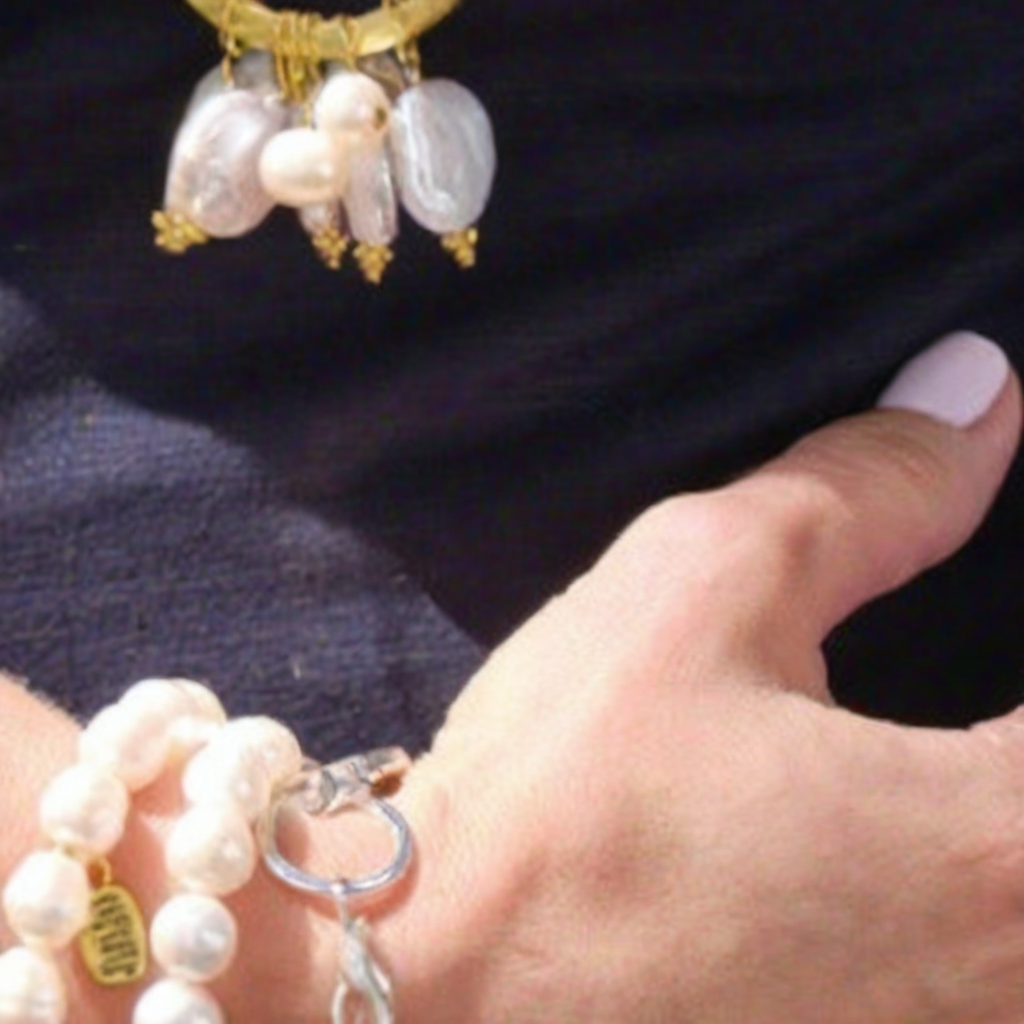}
        \label{fig:super_res_visual_output_1}
    \end{subfigure}
    \begin{subfigure}[b]{0.48\textwidth}
        \centering
        \includegraphics[width=\textwidth]{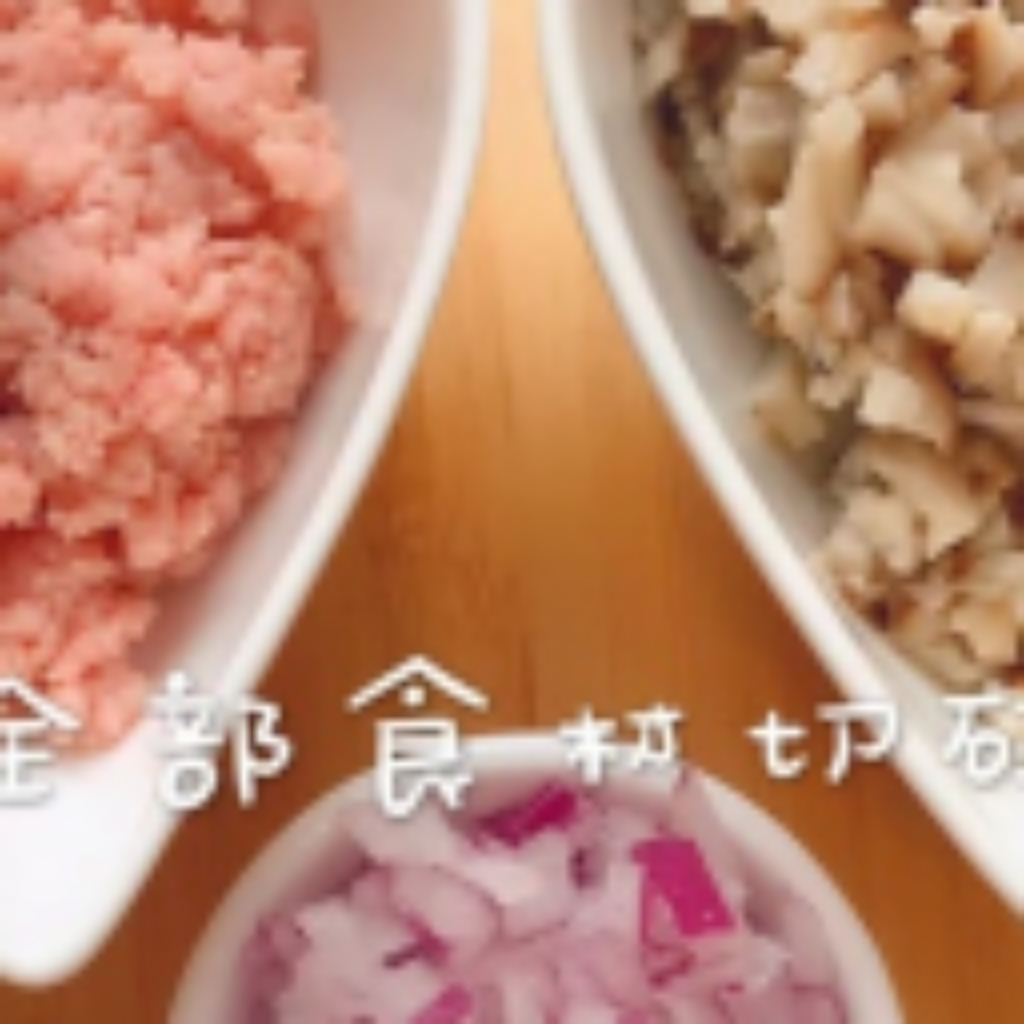}
        \label{fig:super_res_visual_input_2}
    \end{subfigure}
    \hfill
    \begin{subfigure}[b]{0.48\textwidth}
        \centering
        \includegraphics[width=\textwidth]{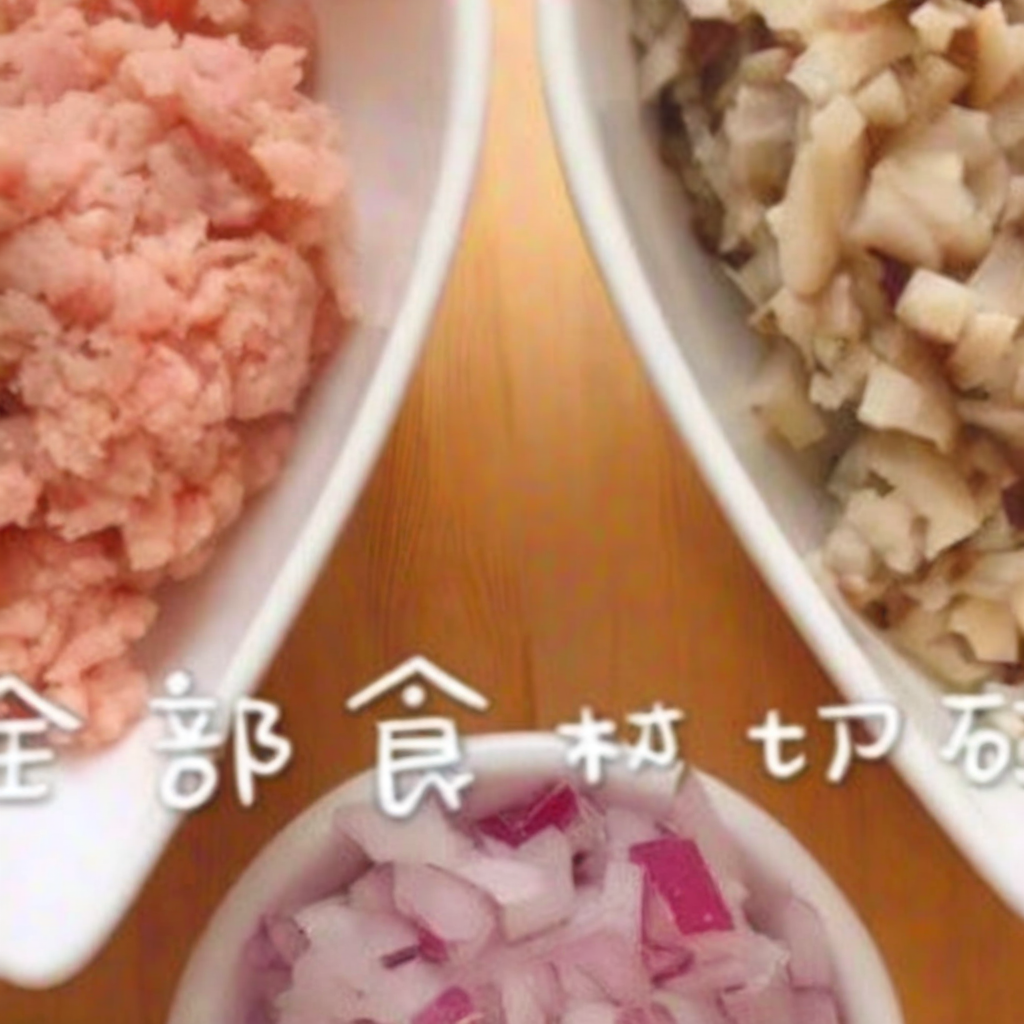}
        \label{fig:super_res_visual_output_2}
    \end{subfigure}
    \begin{subfigure}[b]{0.48\textwidth}
        \centering
        \includegraphics[width=\textwidth]{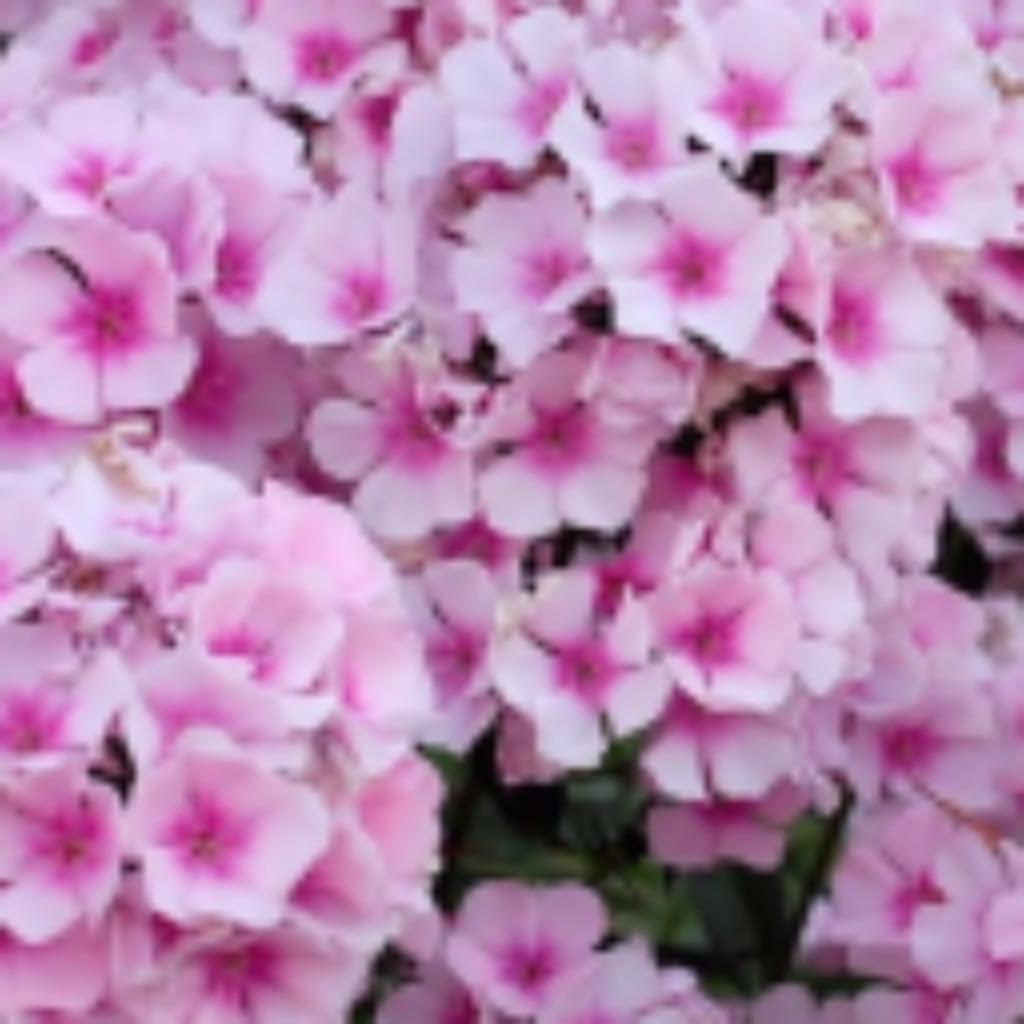}
        \label{fig:super_res_visual_input_3}
    \end{subfigure}
    \hfill
    \begin{subfigure}[b]{0.48\textwidth}
        \centering
        \includegraphics[width=\textwidth]{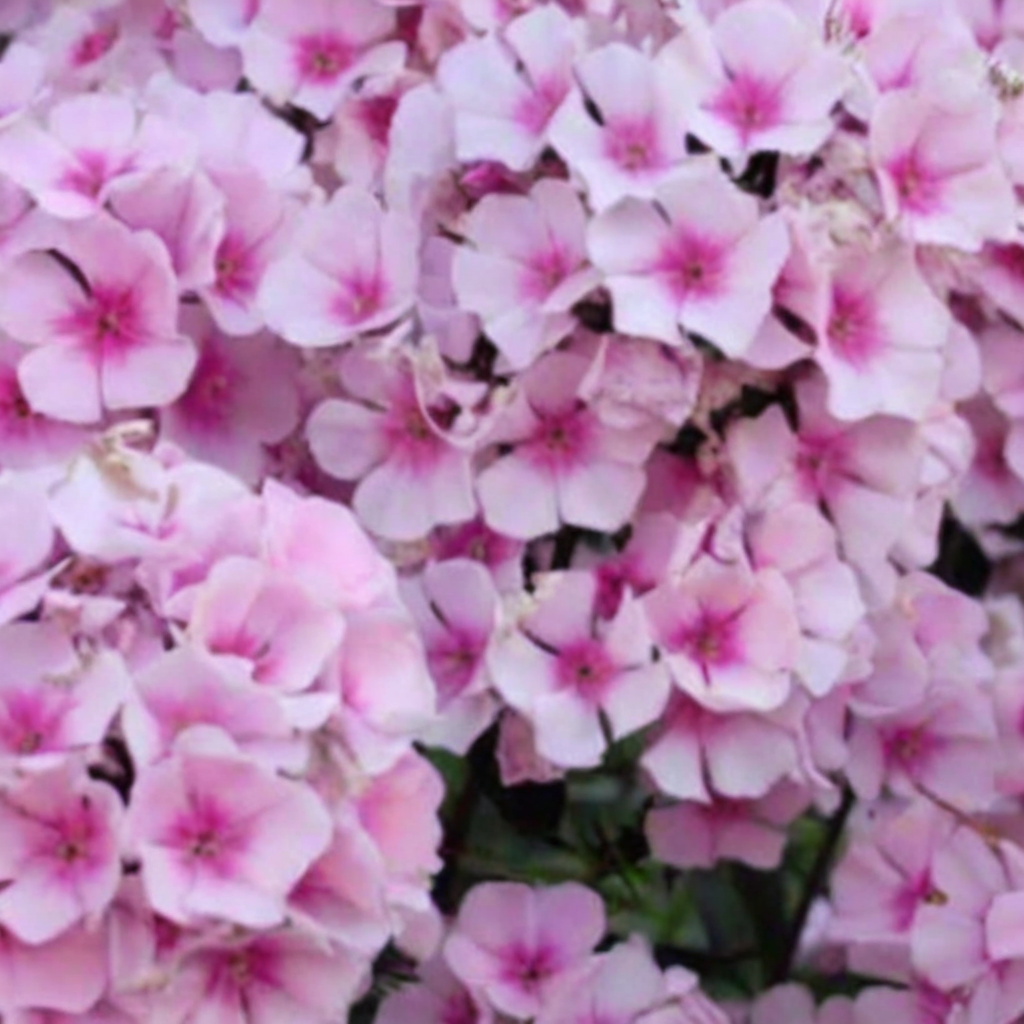}
        \label{fig:super_res_visual_output_3}
    \end{subfigure}
\end{figure}
\begin{figure}[htbp]
    \ContinuedFloat
    \begin{subfigure}[b]{0.48\textwidth}
        \centering
        \includegraphics[width=\textwidth]{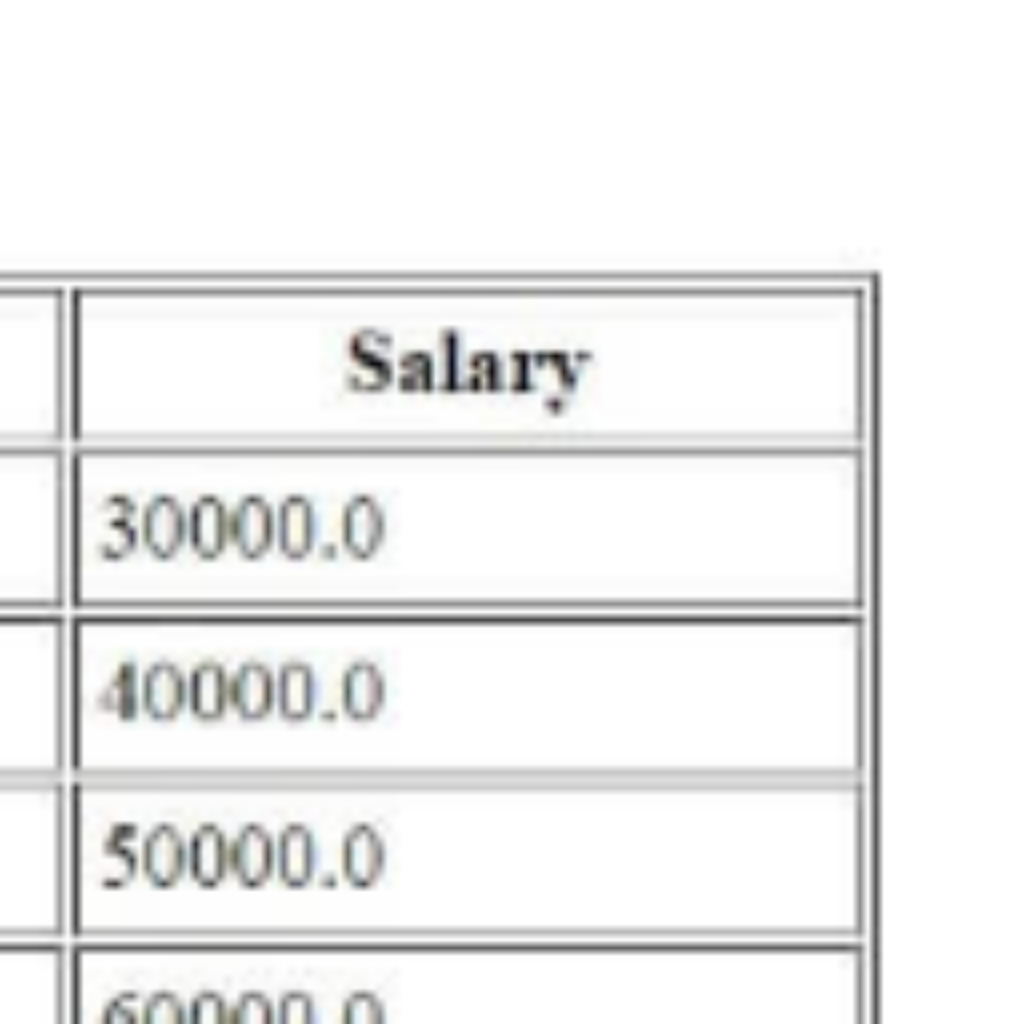}
        \label{fig:super_res_visual_input_4}
    \end{subfigure}
    \hfill
    \begin{subfigure}[b]{0.48\textwidth}
        \centering
        \includegraphics[width=\textwidth]{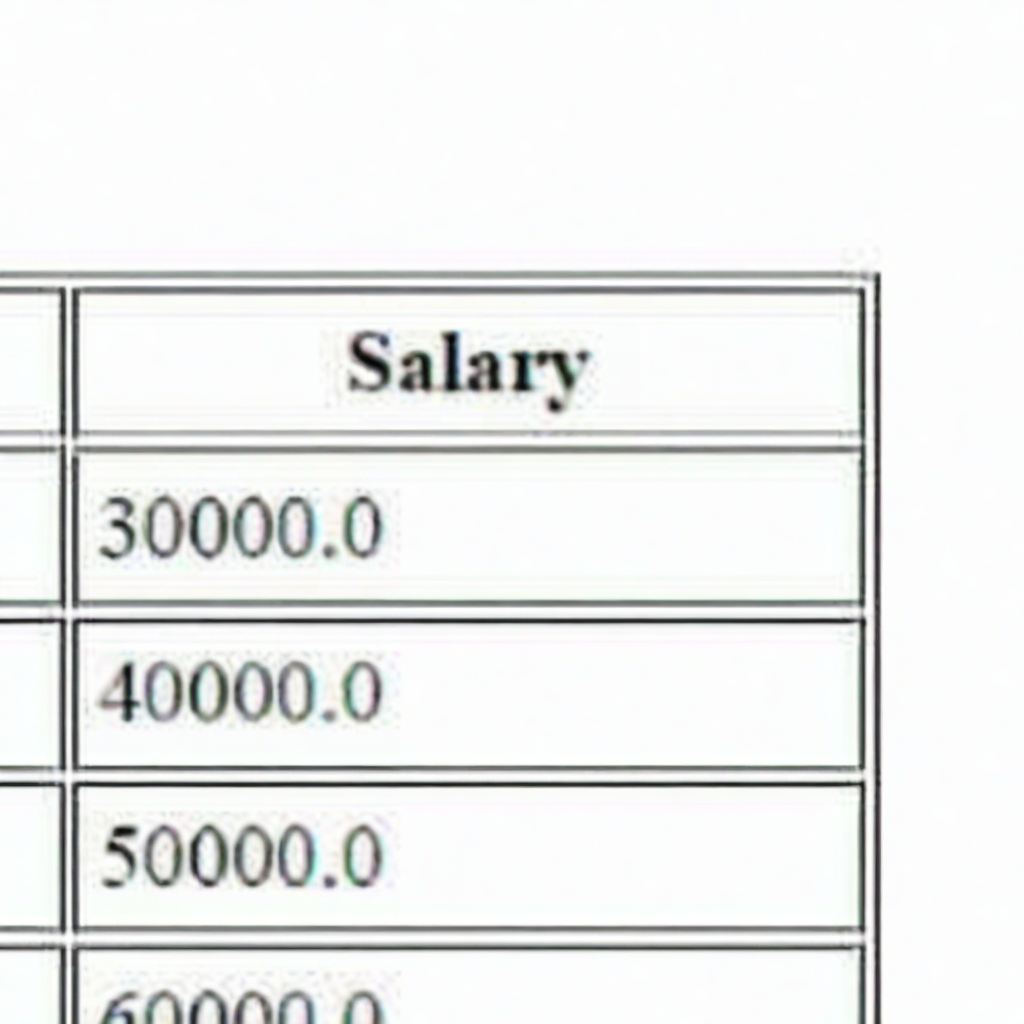}
        \label{fig:super_res_visual_output_4}
    \end{subfigure}
    \begin{subfigure}[b]{0.48\textwidth}
        \centering
        \includegraphics[width=\textwidth]{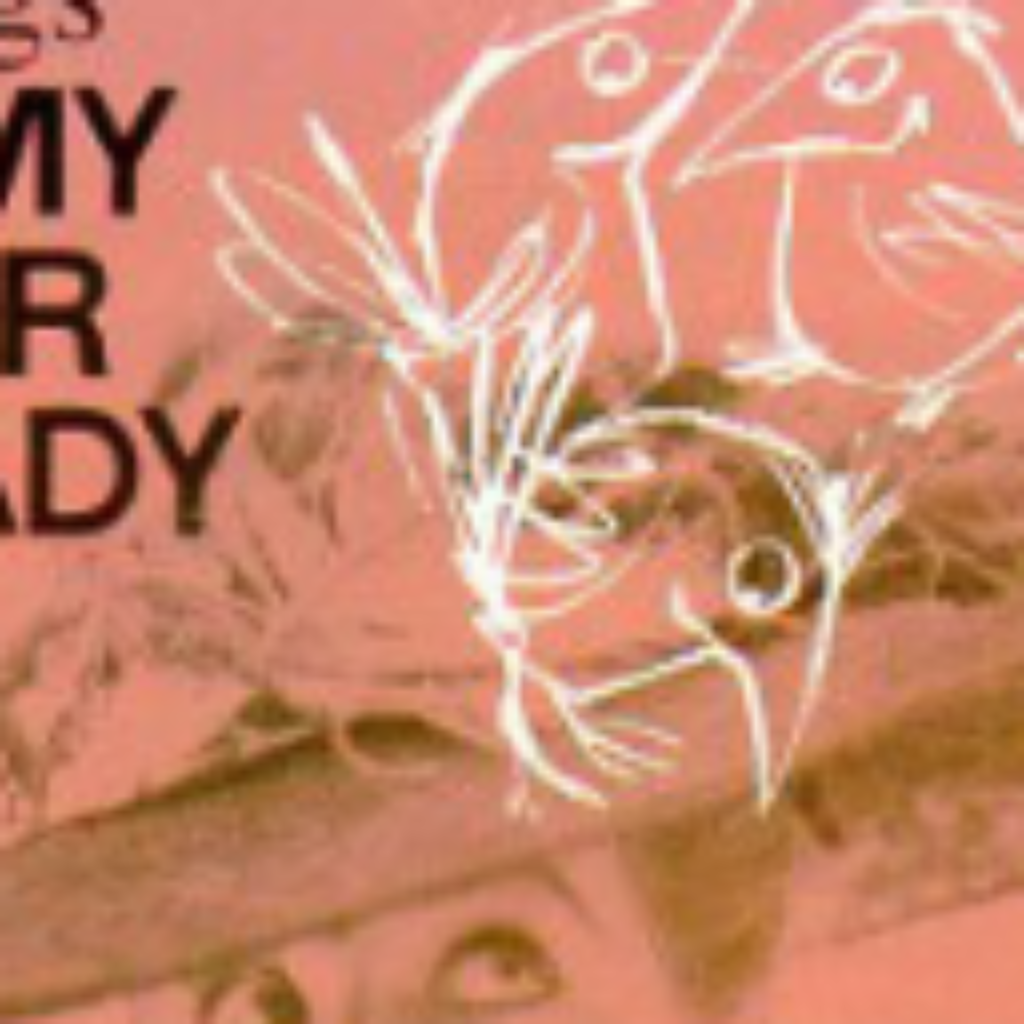}
        \label{fig:super_res_visual_input_5}
    \end{subfigure}
    \hfill
    \begin{subfigure}[b]{0.48\textwidth}
        \centering
        \includegraphics[width=\textwidth]{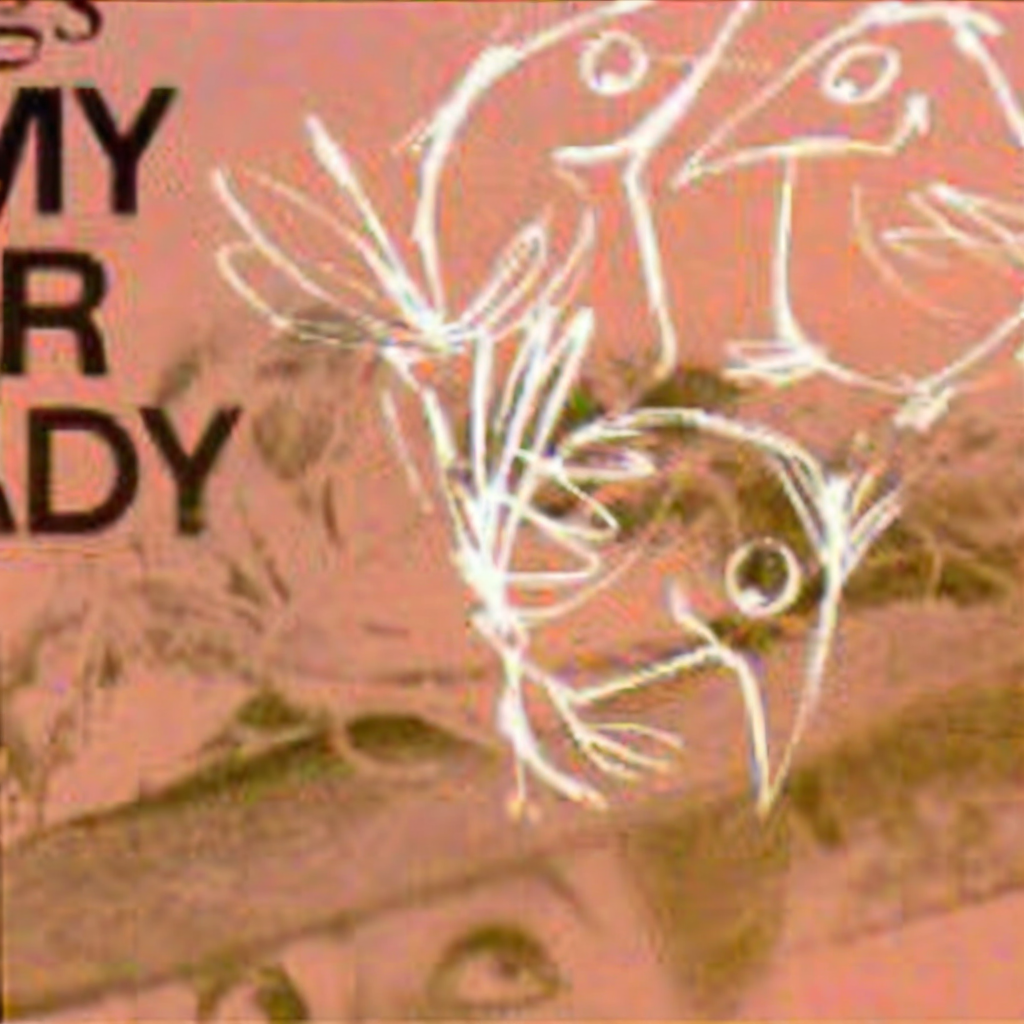}
        \label{fig:super_res_visual_output_5}
    \end{subfigure}
    \caption{Super-resolution examples. Left column: downsampled inputs from the COYO dataset. Right column: super-resolution outputs from our SDXL fine-tuned model with correlated noise.}
    \label{fig:super_res_visuals}
\end{figure}

\begin{figure}
    \centering
    \includegraphics[width=0.8\textwidth]{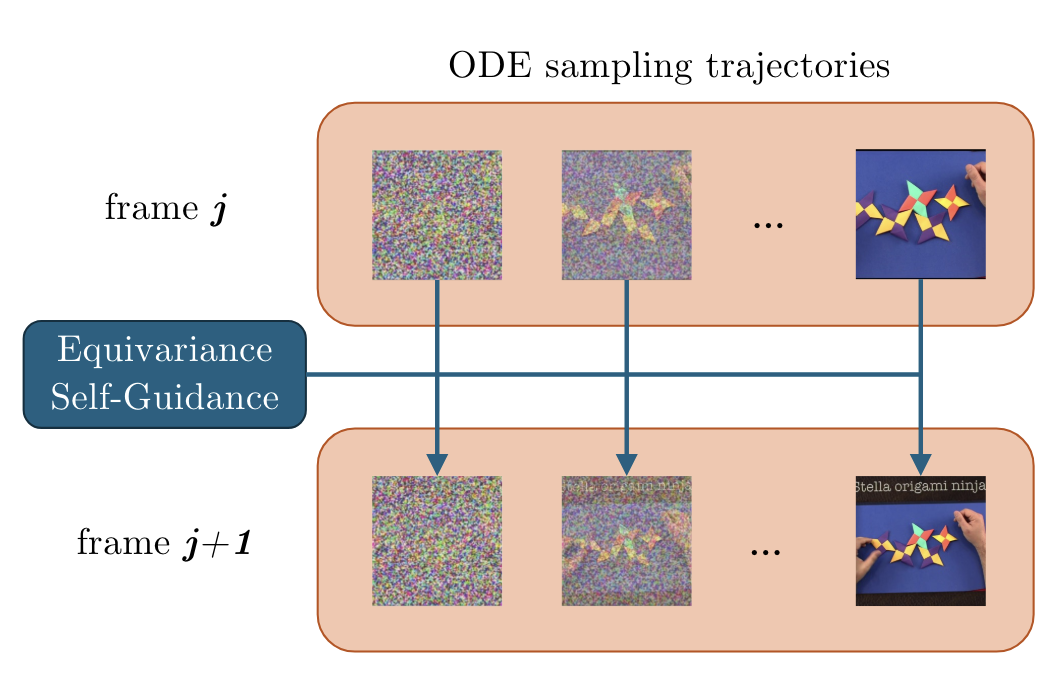}
    \caption{Schematic visualization of Equivariance Self Guidance (see Algorithm \ref{alg:alg1}).}
    \label{fig:fig3}
\end{figure}

\begin{figure}
    \centering
    \begin{subfigure}[b]{0.48\textwidth}
        \centering
        \includegraphics[width=\textwidth]{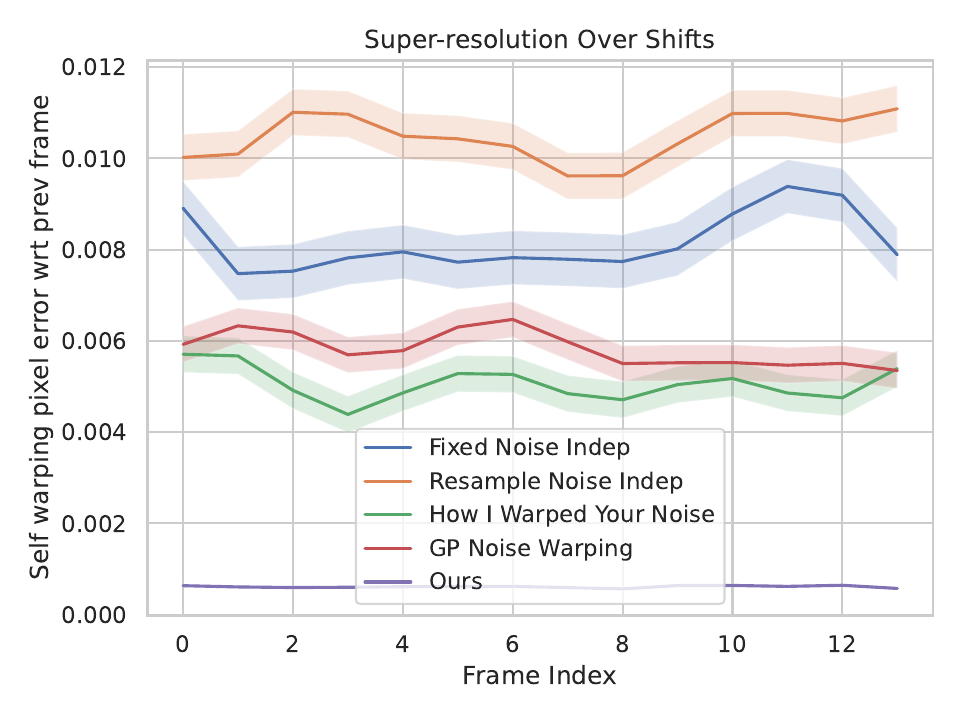}
        \caption{Warping error w.r.t. previously generated frame in pixel space.}
    \end{subfigure}
    \hfill
    \begin{subfigure}[b]{0.48\textwidth}
        \centering
        \includegraphics[width=\textwidth]{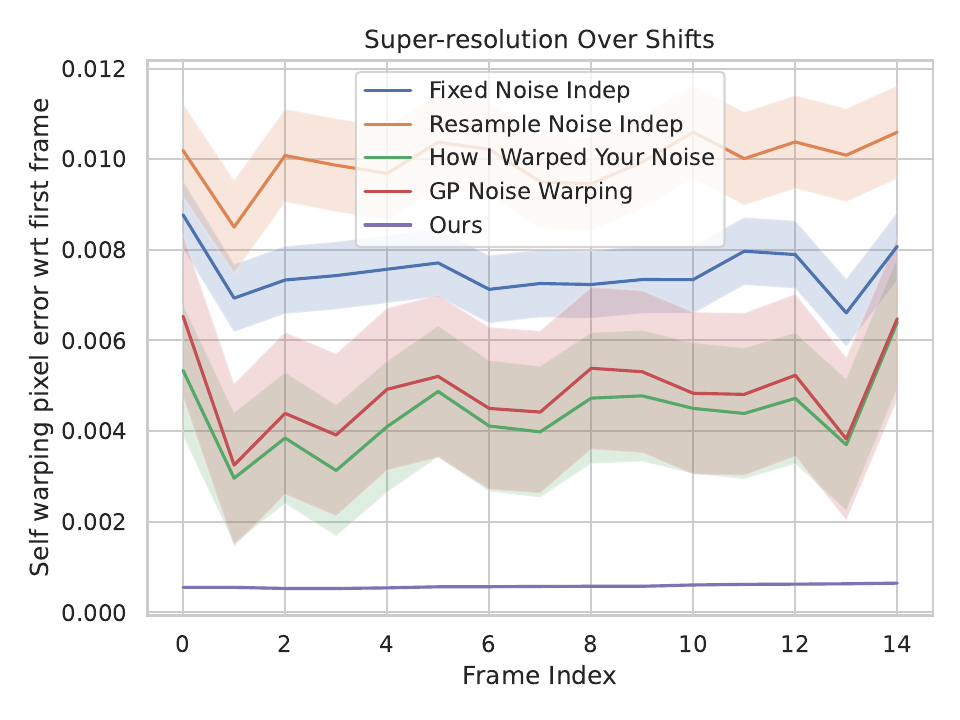}
        \caption{Warping error w.r.t. first generated frame in pixel space.}
    \end{subfigure}
    \caption{Warping errors in pixel space for the super-resolution task as we shift the input frame.}
    \label{fig:super_res_integer_translation_comparions}
\end{figure}

\section{Related Works}
\label{sec:related_work}
Our work is primarily related to three recent lines of research about the utility of diffusion models in inverse problems, video editing, and equivariance in function space diffusion models as elaborated below.

\textbf{Diffusion Models for Inverse Problems. } Diffusion models have been recently received widespread adoption for solving inverse problems in various domains. Diffusion models can solve inverse problems in a few different ways. A simple way is to train or finetune a \textit{conditional} diffusion model for each specific task to learn the conditional distribution from the degraded data distribution to the clean data distribution~\citep{saharia2022palette,liu2023i2sb,shi2022conditional}. Some popular examples include SR3 \cite{saharia2022image} and inpainting stable diffusion \cite{rombach2022high}. We leverage stable diffusion inpainting in the present work. While successful, they however need to be trained (or finetuned) separately for each individual task that is computationally complex. Also, they are not robust to out of distribution data. To mitigate these challenges, \textit{plug-and-play} methods have been introduced that utilize a single foundation diffusion model (e.g., stable diffusion) as a (rich) prior to solve many inverse problems at once~\citep{jalal2021robust, rout2024solving, chung2022diffusion, kawar2022denoising}. The crux of this approach is to modify the sampling post-hoc by either: $(i)$ add guidance to the score function of diffusion models as in \cite{chung2022diffusion, song2023PiGDM}; $(ii)$ approximated projection onto the measurement subspace at each diffusion step \cite{chung2022improving,kawar2022denoising} or, $(iii)$ use regularization by denoising via optimization \cite{mardani2023variational, graikos2022diffusion}. In this work we adopt the guidance-based approach to impose equivariance for the score function. All these methods have been applied for 2D images. For \textit{video} inverse problems, the problem is more challenging due to temporal consistency. There are some efforts to leverage diffusion models for example for text-to-video superresolution or inpainting; see e.g., \cite{rota2023enhancing,girish2023survey, zhou2023flow}. However, there is no systematic framework yet based on 2D diffusion models to solve generic video inverse problems in a temporally consistent manner. This is essentially the focus of our work. Finally, we remark that recent work~\citep{daras2024consistent, daras2023solving, daras2024ambient, aali2024ambient, kawar2023gsure, aali2023solving} has shown that it is even possible to train diffusion models to solve inverse problems without ever seeing clean images from the distribution of interest.

\textbf{Video Editing with Image Diffusion Models.} Due to the lack of full-fledged pre-trained text-to-video diffusion models, many works focus on video editing (or video-to-video translation) using text-to-image diffusion models. One line of research has proposed to fine-tune the image diffusion model on a single text-video pair and generate novel videos that represent the edits at inference~\cite{wu2023tune,liu2023video,zhang2024towards}. Specifically, Tune-A-Video~\cite{wu2023tune} proposed a cross-frame attention mechanism and an efficient one-shot tuning strategy. Video-P2P~\cite{liu2023video} further improved the video inversion performance by optimizing a shared unconditional embedding for all frames. EI$^2$~\cite{zhang2024towards} refined the temporal modules to resolve semantic disparity and temporal inconsistency of video editing. However, the fine-tuning process over the input video makes the editing less efficient. Another line of research has developed various training-free methods for efficient video editing, which mostly rely on the cross-frame attention and latent fusion for maintaining temporal consistency~\cite{ceylan2023pix2video,khachatryan2023text2video,qi2023fatezero,yang2023rerender}. In particular, Text2Video-Zero~\cite{khachatryan2023text2video} encoded the motion dynamics in latent noises through a noise wrapping. FateZero~\cite{qi2023fatezero} fused the attention features with a blending mask obtained by the source prompt’s cross-attention map. Pix2Video~\cite{ceylan2023pix2video} proposed to progressively propagate the changes to the future frames via self-attention feature injection. Rerender-A-Video~\cite{yang2023rerender} proposed hierarchical cross-frame constraints with the optical flow for improved temporal consistency. 

\textbf{Function Space Diffusion Models and Equivariance}
Recently, several works~\cite{lim2023score, kerrigan2022diffusion, kerrigan2023functional} have extended diffusion models to function data. However, these methods primarily focus on theoretical developments and have been examined on simplistic datasets such as time series, Navier-Stokes solutions, or hand-written digits. This paper can be considered one of the first successful applications of function-space diffusion models to natural image datasets. Our work is also related to the equivariant diffusion models which have been extensively explored in scientific applications such as molecule and protein interaction and generation applications~\cite{hoogeboom2022equivariant, anand2022protein, xu2022geodiff, corso2022diffdock, jing2022torsional}. However, equivariant diffusion models for image generation are less explored, primarily because guaranteeing equivariance (for example with respect to translation, rotation, or rescaling) in commonly used diffusion architectures such as U-Net~\cite{ronneberger2015unet, ho2020ddpm} or Transformer~\cite{peebles2022scalable, hatamizadeh2023diffit} models is challenging.

\textbf{Diffusion models trained with correlated noise.} Ours is not the first work to train diffusion models with a prior other than white noise. The authors of \citep{daras2022soft, hoogeboom2022blurring} show how to train diffusion models with blurring corruption, leading to a blurred terminal distribution. Several other works have shown how to generalize diffusion models to find mappings between arbitrary input-output distributions, including~\citep{bansal2022cold, bortoli2021diffusion, delbracio2023inversion}. One new finding in our work is that it is possible to start with a state-of-the-art model trained with white noise and fine-tune it easily to handle correlated noise. This allows us to convert vanilla diffusion models to Function Space Diffusion models by training them with noise sampled from Gaussian Processes. For more details, we refer the reader to Section~\ref{sec:grf}.

\section{Experimental Details}
\label{app:exp_details}

\subsection{Dealing with Optical Flows}
\label{sec:optical_flows}

We use the RAFT model to predict the optical flows~\citep{teed2020raft}. The optical flows can be computed with respect to the first frame or between subsequent frames. We find that the optical flow estimation is much better between subsequent frames and we use subsequent transformations to find the position in the original frame, whenever possible. 

Since we are working with Latent Diffusion Models, all the warping happens in a lower-dimensional space. Fortunately, as observed in numerous prior works, including \citep{sdxl_inp}, there is a geometric correspondence between pixel blocks and latent locations, i.e. pixel blocks are mapped to specific locations in latent space. This allows us to extract the flows from the input frames and convert them to optical flows for our latent vectors. Alternatively, one nat first map to latent space and then compute the optical flow there. We did not pursue this approach since we rely on a deep learning method for the flow-estimation and the underlying model has been trained on natural images.

\subsection{Stable Diffusion XL Finetuning}
\label{sec:training_details}

To fine-tune SDXL in conditional tasks, we use the reference implementation found in the following link: \href{https://github.com/huggingface/diffusers/pull/6592}{https://github.com/huggingface/diffusers/pull/6592}. The reference implementation finetunes SDXL on the inpainting task, however, it is straightforward to adapt it to other conditional tasks, such as super-resolution. As mentioned in the paper, we train all our models for $100,000$ steps. We use the following training hyperparameters:

\begin{itemize}
    \item Training resolution: $1024\times 1024$.
    \item Batch size: $64$.
    \item Latent resolution: $128$.
    \item Optimizer Adam with Weight Decay. Optimizer parameters: \begin{itemize}
        \item Learning rate: $5e-6$
        \item $\beta_1=0.9$
        \item $\beta_2=0.999$
        \item Weight Decay: $1e-2$
        \item $\epsilon=1e-08$
        \item Max Gradient Norm (Gradient clipping): $1.0$
    \end{itemize}
    \item Gaussian Process parameters:
    \begin{enumerate}
        \item Truncation parameter: $2.0$
        \item Number of random features: $3000$
        \item Length scale: $0.004977$.
    \end{enumerate}
\end{itemize}

The parameter length scale controls the amount of correlation in the noise from the GP. Recall that RFFs are generated by sampling $z_j \sim N(0,\epsilon^{-2} I_2)$. To avoid aliasing artifacts when generating GP, we truncated the Normal distribution at $2 \epsilon^{-1}$ (i.e., 2$\times$ its standard deviation) and we made sure that $2 \epsilon^{-1}$ is lower than the Nyquist–Shannon sampling frequency, i.e., $\frac{2 \epsilon^{-1}}{2\pi} \leq \frac{\mathrm{resolution}}{2}$. Given this, as a general rule of thumb, we found that setting the length scale to be $\epsilon := \frac{2}{\pi \cdot \mathrm{resolution}}$ leads to noise realizations that can be used to easily fine-tune Stable Diffusion XL.

We train all our models on $16$ A100 GPUs on a SLURM-based cluster. The fine-tuning of the SDXL model on conditional tasks (super-resolution, inpainting) with correlated noise for $100$k steps takes roughly $24$ hours.

\subsection{Sampling Speed}
\label{sec:speed}
Sampling guidance for equivariance increases the generation time for two reasons: i) we need to run more steps in order to make it effective and, ii) each step is more expensive since we need to perform an additional backpropagation. For our experiments, we use 50 steps instead of $25$ steps that we use for unconditional sampling. Further, without guidance, we get $4.32$ iterations per second on a single A100 GPU while with guidance we obtain $1.62$ iterations per second. 

The other hyperparameter used in sampling is the guidance strength, see Algorithm \ref{alg:alg1}. For $\lambda=0$, there is no guidance and the method just becomes GP Noise Warping. For higher $\lambda$ the gradient from the warping guidance becomes stronger. In our experiments, we found the value $\lambda=1$ to perform the best. This is consistent with the choice of $\lambda$ in the Diffusion Posterior Sampling~\citep{chung2022diffusion} paper which uses a guidance term to apply diffusion models for general inverse problems.

We perform all our sampling experiments on a single A-100 GPU. Without sampling guidance, it takes roughly $20$ seconds to generate a single frame. We measure the performance of our method and the baselines on 2 second videos consisting of $16$ frames.

\section{Broader Impact}
\label{sec:broader_impact}
Our method allows the use of image diffusion models to solve video inverse problems. There are both positive and negative societal implications of such a method. On the positive side, our method does not require training of video models which is typically expensive and contributes to increasing the AI carbon footprint. Further, democratizes access to video editing tools. The average practitioner can now leverage state-of-the-art image models to solve video inverse problems. To illustrate the effectiveness of our method, we trained powerful text-conditioned inpainting models that work on arbitrary images from the web. On the negative side, these models can be used for adversarial image and video editing. Further, our method can be used for the generation of deepfakes.

\end{document}